\newcommand{\squeezeup}{\vspace{-2.5mm}}
\newtheorem{theorem}{Theorem}[section]
\newtheorem{lemma}[theorem]{Lemma}
\theoremstyle{definition}
\newtheorem{definition}{Definition}[section]
\theoremstyle{remark}
\newtheorem{prop}{Proposition}
\newcommand{\indep}{\perp\!\!\!\perp}
\newtheorem{assumption}{Assumption}
\DeclareMathOperator*{\argmax}{arg\,max}
\DeclareMathOperator*{\argmin}{arg\,min}
\title{Equal Opportunity of Coverage in Fair Regression}
\author{%
  Fangxin Wang\\
  University of Illinois Chicago \\
  Chicago, USA\\
  \texttt{fwang51@uic.edu} \\
  \And
  Lu Cheng \\
  University of Illinois Chicago \\
  Chicago, USA \\
  \texttt{lucheng@uic.edu}\\
  \And
  Ruocheng Guo \\
  ByteDance Research \\
  London, UK \\
  \texttt{rguo.asu@gmail.com} \\
  \AND
  Kay Liu\\
  University of Illinois Chicago  \\
  Chicago, USA\\
  \texttt{zliu234@uic.edu} \\
  \And
  Philip S. Yu \\
  University of Illinois Chicago \\
  Chicago, USA \\
  \texttt{psyu@uic.edu} \\
}
\begin{document}

\maketitle

\begin{sloppypar}
\begin{abstract}
    We study fair machine learning (ML) under predictive uncertainty to enable reliable and trustworthy decision-making. The seminal work of ``equalized coverage'' proposed an uncertainty-aware fairness notion. However, it does not guarantee equal coverage rates across more fine-grained groups (e.g., low-income females) conditioning on the true label and is biased in the assessment of uncertainty. To tackle these limitations, we propose a new uncertainty-aware fairness -- Equal Opportunity of Coverage (EOC) -- that aims to achieve two properties: (1) coverage rates for different groups with similar outcomes are close, and (2) the coverage rate for the entire population remains at a predetermined level. Further, the prediction intervals should be narrow to be informative. We propose Binned Fair Quantile Regression (BFQR), a distribution-free post-processing method to improve EOC with reasonable width for \textit{any} trained ML models. It first calibrates a hold-out set to bound deviation from EOC, then leverages conformal prediction to maintain EOC on a test set, meanwhile optimizing prediction interval width. Experimental results demonstrate the effectiveness of our method in improving EOC. Our code is publicly available at \url{https://github.com/fangxin-wang/bfqr}.
    
\end{abstract}

\section{Introduction}
\label{sec01: intro}
 
Machine Learning (ML) can bring bias and discrimination even with good intentions~\cite{osoba2017intelligence,angwin2016machine,howard2018ugly,chouldechova2017fair,cheng2021socially,zou-etal-2023-decrisis}. Fair ML has been developed to counteract unfairness, but the practical use of fair ML models is limited by predictive uncertainty. Predictive uncertainty is the extent to which ML can confidently predict the future. Over- or under-confidence can cause an ML model to be unaware of its own knowledge gaps and make inaccurate predictions \cite{kendall2017uncertainties,guo2017calibration,zou-caragea-2023-jointmatch}. This can lead to unfairness in decision-making. To address this, we can produce predicted intervals for each sample and incorporate uncertainty into fairness to make decisions more reliable and trustworthy.

The idea of ``equalized coverage'' -- an uncertainty-aware notion of demographic parity \cite{dwork2012fairness} -- was introduced in a study \cite{romano2020malice} as a way to ensure that every group receives the same level of prediction certainty. It works by generating prediction intervals that cover the true label $Y$ with a specified probability (e.g., 90\%), while also reflecting uncertainty through interval width. However, even with this approach, there are still disparities in coverage rates across groups when conditioning on $Y$. For example, we observe from the empirical results (Fig. \ref{Fig: intro}) for \textit{Adult} dataset \cite{chouldechova2017fair} that low-income women are less likely to be covered than men in the same income bracket, and high-income men are less likely to be predicted to earn as much as high-income women. Consequently, the widths of prediction intervals for different groups, as an indicator of the model uncertainty, are not comparable under different coverage rates. These disparities can lead to unfair risk assessment for domains like bank loans and taxation. Further, equalized coverage may sacrifice the efficiency of uncertainty estimation for ensuring coverage rate as it produces wider prediction intervals for the group it intends to protect (See Section~\ref{sec04:exp}).  

\squeezeup
\squeezeup

\begin{figure}[H] 
\centering 
\includegraphics[width=0.7\textwidth]{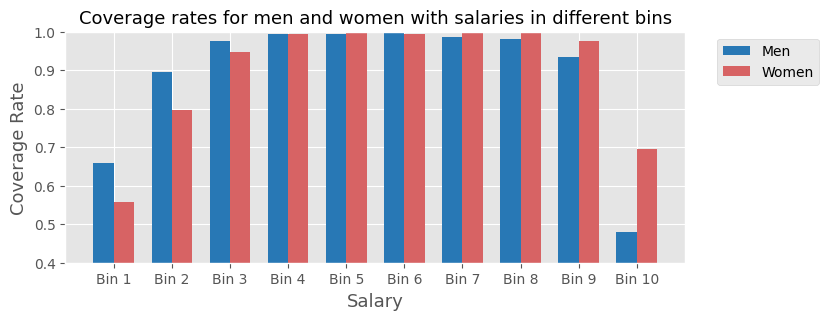} 
\caption{Evaluating equalized coverage~\cite{romano2020malice} on the \textit{Adult} dataset. The protected attribute is gender.
The test data is partitioned into 10 equal-mass bins based on the ascending order of salaries. Significant variations in the coverage rates can be observed among different groups within the head and tail bins.
} 
\label{Fig: intro} 
\end{figure}

\squeezeup
\squeezeup

To address the limitations of equalized coverage, we propose a novel uncertainty-based fairness notion \textit{Equal Opportunity of Coverage (EOC)}, extending from the standard fairness notion of equal opportunity \cite{hardt2016equality}. EOC aims to achieve two properties: (1) similar coverage rates for different groups (e.g., female and male) with similar outcomes $Y$ (e.g., salaries), (2) achieve a desired level of coverage rate (e.g., 90\%) for the entire population. Ideally, to provide informative predictions, intervals should be as narrow as possible while still satisfying EOC.

We consider the regression task as it is a more general fair ML setting~\cite{pmlr-v80-komiyama18a,agarwal2019fair} with minimal assumptions about the underlying data distribution.

Achieving EOC confronts various challenges. Firstly, the majority of prior fair ML approaches are developed for classification problems~\cite{hardt2016equality,agarwal2018reductions}, with only a few for regression~\cite{agarwal2019fair,chi2021understanding}. This necessitates developing effective techniques to measure and improve EOC in a regression setting. Secondly, ensuring EOC and marginal coverage rate for test data is difficult when true labels are unknown. Finally, prior works~\cite{romano2020malice,feldman2021improving,bastani2022practical,liu2022conformalized} have primarily focused on improving fairness and satisfying coverage rate guarantees, but often neglect the width of prediction intervals during the optimization process. This often results in generating wider intervals that limit the amount of decision-making information available. However, optimizing interval width with fairness constraints is a non-convex problem, and coverage rates are difficult to guarantee on noisy data, making it challenging to consider all these factors simultaneously.

To address these challenges, we propose Binned Fair Quantile Regression (BFQR), a distribution-free post-processing method that improves EOC while maintaining a desired marginal coverage rate and a narrow prediction interval. It consists of three major steps: First, a hold-out (calibration) dataset is calibrated to improve EOC based on true label $Y$ within discretized bins; second, we leverage conformal prediction \cite{vovk2005algorithmic,angelopoulos2021gentle} to achieve EOC for test data from the calibration results in the first step; and finally, an efficient and robust optimization technique is developed to minimize the mean width of prediction intervals. Experiments on both synthetic and real-world data show that BFQR is more effective in improving EOC fairness than state-of-the-art methods.

\textbf{Related Work.} In regression settings, equal opportunity~\cite{hardt2016equality} has been studied mostly by adversarial training~\cite{delobelle2021ethical,zhang2018mitigating}. For quantile regression,  models with equal opportunity constraint \cite{yang2019fair,williamson2019fairness} are proposed. However, all mentioned are in-processing methods and have trade-offs between accuracy and fairness~\cite{pinzon2023incompatibility, zhao2022inherent}. There are several works on uncertainty-aware fairness in regression. The pioneering work equalized coverage \cite{romano2020malice}, is built on the validity of conditional coverage~\cite{pmlr-v25-vovk12}, and has several follow-ups \cite{feldman2021improving, bastani2022practical}. \cite{liu2022conformalized} also proposes conformalized fair regression, imposing demographic parity fairness on prediction interval bounds. However, neither of these methods considers the fine-grained group (e.g., low-income females) fairness conditional on true labels as well as the increased width of prediction intervals that provide little information.

\section{Preliminary}
\label{sec02:prelim}

\textbf{Conformal Prediction} extends traditional ML by providing a set of prediction intervals around the predicted value, which can be used to assess the level of confidence or uncertainty in the prediction~\cite{vovk2005algorithmic}. It is distribution-free and has a rigorous statistical guarantee. A commonly used approach is split conformal prediction~\cite{vovk2005algorithmic,angelopoulos2021gentle}. To start, the training data is divided into two sets: the training set $\mathbb{D}_{tr}$ and the calibration set $\mathbb{D}_{c} = \{ (X_1,Y_1), ... ,(X_n,Y_n)\}$. A prediction model $\hat{f}$ is trained on the training set $\mathbb{D}_{tr}$. The key ingredient is the conformity score function $S(x,y)\in \mathbb{R}$ used to evaluate the model's prediction performance. Given a desired error rate $\alpha$, it then calculates the quantile $\hat{Q}_{1 - \alpha}(S,\mathbb{D}_{c})$, denoting the $(1-\alpha)(1 + 1/|\mathbb{D}_c|)$-th quantile of the empirical distribution of $S$ on $\mathbb{D}_{c}$. Finally, for a sample $X_{n+1}$ in the test set $\mathbb{D}_{t}$, its prediction set is $C^S(X_{n+1}) = \{y: S(X_{n+1},y) \leq \hat{Q}_{1 - \alpha}(S,\mathbb{D}_{c}) \}$. This set contains all possible values of $y$ for which the conformity score $S(X_{n+1},y)$ is less than or equal to the calculated quantile. Given a mild assumption that the test and calibration set are exchangeable, the coverage rate in conformal prediction is guaranteed with a probability of $P(Y_{n+1} \in C^S(X_{n+1}) ) \geq 1 -\alpha$. 

\textbf{Equalized Coverage.} Let $V\in\mathcal{V}$ be the indicator of whether $Y$ is covered in the prediction set $\hat{C}(X)$, i.e., $ V = \mathbbm{1}[Y \in \hat{C}(X) ]$. $\mathcal{V} = \{ 0, 1\}$.
Given a desired error rate $\alpha \in [0,1]$, equalized coverage is satisfied \cite{romano2020malice} when $ \forall a \in A$, $Pr \{ V | A = a \} = Pr \{ V \} \geq 1 - \alpha$ and $Pr \{ V \} \geq 1 - \alpha$. Equalized coverage guarantees equal conditional coverage, i.e., $V$ is independent of $A$ (denoted as $V \perp\!\!\!\perp A$), but fails to ensure equal coverage conditional on $Y$, especially for extreme values of $Y$ (Fig. \ref{Fig: intro}). This is problematic as it can perpetuate discrimination against marginalized groups (e.g., females with low income) in risk assessment.

\squeezeup

\section{Equal Opportunity of Coverage}
\label{sec03: EOC}
Equal Opportunity of Coverage addresses the limitation of equalized coverage and is defined based on the equal opportunity \cite{hardt2016equality}: 
\begin{definition}[Equal Opportunity of Coverage (EOC)]
EOC is satisfied when $Pr \{ V | A = a , Y = y \} = Pr \{ V | Y = y \} $, i.e., $V \perp\!\!\!\perp A | Y $ and $Pr \{ V \} \geq 1 - \alpha$, $ \forall y \in Y$ and $ \forall a \in A$.
\end{definition}

The definition of EOC requires that (1) conditioned on the target variable $Y$, whether a sample is covered in its prediction interval should be independent of its sensitive attribute $A$; (2) the marginal coverage rate is above the desired level. Interestingly, the difference between equalized coverage and EOC in their mathematical forms share similarities with the difference between demographic parity and equalized odds. Note that our focus here is whether the true label is covered in the prediction interval since even though the prediction interval contains false labels, $V = \mathbb{I}[Y \in \hat{C}(X) ] = 1$ is still valid. Therefore, while EOC has a similar formation to equalized odds, it in fact describes equal opportunity which focuses on $Y=1$. 

Preferably, both EOC and equalized coverage should be guaranteed. However, the \textit{mutual exclusivity theorem} below suggests that there is an inherent trade-off between EOC and equalized coverage:
\begin{theorem}[Mutual Exclusivity] \label{thm:exclu}
If $A \not\!\perp\!\!\!\perp Y$ and $V \not\!\perp\!\!\!\perp Y$, then either equalized coverage or equal opportunity of coverage holds but not both.
\end{theorem}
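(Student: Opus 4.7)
The plan is to argue by contradiction: assume that both equalized coverage ($V \indep A$) and EOC ($V \indep A \mid Y$) hold, and then show that this forces either $V \indep Y$ or $A \indep Y$, contradicting the hypotheses of the theorem.

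First, I would apply the law of total probability to $P(V \mid A=a)$ by conditioning on $Y$, and use EOC to simplify $P(V \mid A=a, Y=y)$ to $P(V \mid Y=y)$:
\[
P(V \mid A=a) \;=\; \sum_y P(V \mid Y=y)\, P(Y=y \mid A=a).
\]
Equalized coverage forces this to equal the marginal $P(V)$ for every $a$. Subtracting the identity $P(V) = \sum_y P(V \mid Y=y)\, P(Y=y)$ from both sides yields the key relation
\[
\sum_y P(V \mid Y=y)\, \bigl[P(Y=y \mid A=a) - P(Y=y)\bigr] \;=\; 0 \quad \text{for every } a.
\]
With $f(y) := P(V \mid Y=y)$ and $g_a(y) := P(Y=y \mid A=a) - P(Y=y)$, this becomes an orthogonality condition $\langle f, g_a\rangle = 0$ for every $a$, paired with the automatic zero-sum constraint $\sum_y g_a(y) = 0$.

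Finally, I would translate the negated hypotheses: $V \not\indep Y$ makes $f$ non-constant in $y$, and $A \not\indep Y$ makes some $g_a$ nonzero. The cleanest instance is when $Y$ is essentially binary; the zero-sum subspace is then one-dimensional, spanned by $(1,-1)$, so $\langle f, g_a\rangle = 0$ forces $f(0)=f(1)$, contradicting the non-constancy of $f$. I expect the main obstacle to be the extension to general multi-level or continuous $Y$, where the zero-sum subspace has dimension $|Y|-1$ and a single non-constant $f$ can in isolation be orthogonal to a nonzero zero-sum vector $g_a$. To close this case, I would use the entire family $\{g_a\}_{a\in A}$ together with the coupling $\sum_a P(A=a)\, g_a \equiv 0$, and argue via a rank/dimension argument on the centered conditional matrix $[P(Y=y \mid A=a) - P(Y=y)]_{a,y}$ that simultaneous orthogonality to a non-constant $f$ must collapse its row space to zero, yielding $A \indep Y$ and the desired contradiction.
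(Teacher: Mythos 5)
Your derivation is exactly the intended route, and in fact goes further than the paper does: the paper's entire ``proof'' is the single sentence ``If $V \indep A$ and $V \indep A \mid Y$, then either $A \indep Y$ or $V \indep Y$,'' i.e.\ a restatement of the contrapositive with no supporting computation. Your law-of-total-probability step, the resulting orthogonality condition $\sum_y P(V\mid Y=y)\bigl[P(Y=y\mid A=a)-P(Y=y)\bigr]=0$ for every $a$, and the clean resolution of the binary-$Y$ case are all correct and constitute the standard argument for this kind of incompatibility result.

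However, the obstacle you flag for multi-valued $Y$ is not a technicality you can engineer around: the statement is genuinely false there, and your proposed rank/dimension argument cannot close the gap. Take $\mathcal{Y}=\{1,2,3\}$ with $P(Y=y)=1/3$, two equiprobable groups with $P(Y=\cdot\mid A=0)=(1/3+\delta,\ 1/3-2\delta,\ 1/3+\delta)$ and $P(Y=\cdot\mid A=1)=(1/3-\delta,\ 1/3+2\delta,\ 1/3-\delta)$, and $P(V=1\mid Y=y,A=a)=f(y)$ with $f=(0.5,0.6,0.7)$. Then $V\indep A\mid Y$ holds by construction, and $\langle f,(1,-2,1)\rangle=0$ gives $P(V=1\mid A=a)=0.6=P(V=1)$ for both groups, so $V\indep A$ holds as well; yet $f$ is non-constant (so $V\not\indep Y$) and $\delta\neq 0$ (so $A\not\indep Y$). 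Here the centered conditional matrix $[P(Y=y\mid A=a)-P(Y=y)]_{a,y}$ has rank one, and its row space is a nonzero zero-sum direction orthogonal to a non-constant $f$ --- precisely the configuration your rank argument would need to exclude, and which cannot be excluded. The theorem as stated for a regression target therefore requires an additional hypothesis (e.g.\ binary $Y$, or a genericity condition ruling out this orthogonality); neither your proposal nor the paper supplies one, but your write-up at least makes the difficulty visible where the paper's does not.
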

\textit{Proof.} If $V \!\perp\!\!\!\perp A$ and $V \!\perp\!\!\!\perp A | Y$, then either $A \!\perp\!\!\!\perp Y$ or $V \!\perp\!\!\!\perp Y$. 

Unfairness often arises from the fact that features predictive of $Y$ are also correlated to the protected attribute due to e.g., historical bias in the data~\cite{mehrabi2021survey}. This indicates $V \!\perp\!\!\!\perp A$. For the second condition, though $V \not\!\perp\!\!\!\perp Y$ is possible, $V$ needs to depend on $Y$ to ensure prediction intervals with reasonable width. Predicting certain values (e.g., extreme values) of $Y$ can be challenging due to representative bias. If we enforce the predictor to provide high coverage for marginalized groups with these values, it is highly likely to result in extremely wide intervals that offer little guidance in decision-making. 

\textbf{Meauring EOC}. Given the underlying distribution 
$p \in\Delta ( \mathcal{V} \times \mathcal{A} \times \mathcal{Y})$, where $\mathcal{V}$, $\mathcal{A}$, $\mathcal{Y}$ is the domain of $v$, $a$, $y$ respectively. We can determine how likely $p$ satisfies EOC by measuring its distance to $p'$, the closest distribution that perfectly achieves EOC. Formally, we denote $P_{EOC}$ as the property of EOC, the set of all distributions in defined space that satisfy EOC, i.e., $P_{EOC}:= \{p \in \Delta(\mathcal{V} \times \mathcal{A} \times \mathcal{Y}): (V, A, Y) \sim p, V \!\perp\!\!\!\perp A | Y \}$ . $p' \in P_{EOC}$ is the distribution with minimum total variation (TV) distance to $p$, i.e., $\forall q \in P_{EOC}$, $d_{TV}(p, p') \leq d_{TV}(p, q)$. The TV distance between $p$ and $p'$ is formally defined as $d_{TV}(p, p') = \frac{1}{2} \sum_{(v,a,y) \in \Delta(\mathcal{V} \times \mathcal{A} \times \mathcal{Y})} |p(v,a,y) - p'(v,a,y)| = \frac{1}{2}\left\| p - p' \right\|_{1}$, where $\left\|\cdot \right\|_{1}$ denotes the $l_1$ norm of a distribution. Instead of the commonly-used Kolmogorov–Smirnov (KS) distance \cite{hardt2016equality,zhang2012kernel}, TV distance is chosen as the measure due to the significant drawback of the KS distance -- its insensitivity to the deviations between $p$ and $p'$ at the tails \cite{sadhanala2019higher}. Whereas in real-world data, we often confront such significant deviation (see e.g., Fig. \ref{Fig: intro}).

It is challenging to directly measure the distance between $p$ and $p'$ from observed data when the target variable $Y$ is continuous due to data sparsity issues for each value of $Y$. However, drawing from previous works~\cite{canonne2018testing, neykov2021minimax}, we can instead use an easy-to-compute statistic $T$ as a surrogate of $d_{TV}(p, p')$. $T$ is the measure of independence, assessing the weighted summed violation of EOC within each discretized bin. The expectation of $T$, $\mathbb{E}[T]$, has a fixed upper bound when $p$ satisfies EOC, and a lower bound that increases with the TV distance between $p$ and $p'$. The intuition here is that when $Y$ is divided into sufficient bins and $p_Y:= Pr\{V, A|Y\}$, the conditional distribution of $p$ on $Y$, is Lipschitz continuous, enforcing independence within each bin generates a distribution that does not deviate far from $P_{EOC}$ in terms of TV distance. 

\begin{lemma} \cite{neykov2021minimax}
    Let $d = \lceil n^{2/5} \rceil$ be the number of bins $Y$ is discretized into, with each bin having an equal size of samples. Let $T$ be a sum of independence measures within all discretized bins. Given the Lipschitz continuity of $p_Y$ in Assumption \ref{assump: lip}, $L$ is the Lipschitz constant, we have:
    
    1) When $p$ satisfies EOC, $\mathbb{E}[T] \leq \frac{|\mathbb{D}| L^2}{d^2}$;
    
    2) When $d_{TV}(p, p') = \epsilon$, there exists a constant $Z$ such that $\mathbb{E}[T] \geq Z(\epsilon - 3\frac{L}{d})^2$.

    \label{lemma:justify_discrete}
\end{lemma}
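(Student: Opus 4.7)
The plan is to realize $T$ as a sum $T=\sum_{j=1}^{d} T_j$ where each $T_j$ is an unbiased U-statistic computed from the $\approx |\mathbb{D}|/d$ samples falling inside the $j$-th bin $B_j$, whose mean is proportional to the squared $\ell_2$ distance between the bin-conditional joint $p_{V,A\mid Y\in B_j}$ and the product of its bin-conditional marginals. With this choice $\mathbb{E}[T]$ is a sample-weighted quantitative measure of ``within-bin non-independence'' of $(V,A)$ given $Y$, and both halves of the lemma will follow by translating the Lipschitz assumption on the pointwise conditional $p_Y$ into statements about these bin-conditional laws.

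For part (1), I would fix a bin $B_j$ of width $O(1/d)$ and use Lipschitz continuity with constant $L$ to bound the fluctuation of $y\mapsto p_{V,A\mid Y=y}$ and of each of its marginals over $B_j$ by $O(L/d)$. A short triangle-inequality argument then shows that, under EOC, the bin-averaged joint differs from the product of its bin-averaged marginals by $O(L/d)$ in $\ell_2$, so the squared within-bin deviation is $O(L^2/d^2)$. Multiplying by the $|\mathbb{D}|/d$ samples in the bin and summing over the $d$ bins gives $\mathbb{E}[T] \le |\mathbb{D}|\,L^2/d^2$, which is the stated upper bound.

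For part (2), I would run the same argument in reverse. Given $d_{TV}(p,p')=\epsilon$, define a ``bin-wise projection'' $\tilde p$ of $p$ that forces $V\perp\!\!\!\perp A$ inside each $B_j$ while keeping the bin-conditional marginals intact; by the same Lipschitz calculation $\tilde p$ is within $O(L/d)$ in TV of an honest element of $P_{EOC}$, and the analogous remark applies to $p'$ itself. Two applications of the triangle inequality then lower-bound the aggregate bin-wise $\ell_1$ non-independence of $p$ by $\epsilon - 3L/d$, the coefficient $3$ absorbing three separate $L/d$ discretization losses (between $p$ and its bin-wise average, between $p'$ and its bin-wise product, and between a bin-wise product and its closest truly EOC distribution). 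Converting this bin-wise $\ell_1$ deviation into the squared $\ell_2$-type statistic $T$ via Cauchy--Schwarz, and absorbing the finite alphabet-size constants of $\mathcal{V}\times\mathcal{A}$ into a single $Z$, yields $\mathbb{E}[T]\ge Z(\epsilon - 3L/d)^2$.

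The main obstacle is part (2): the projection $\tilde p$ has to be chosen so that simultaneously (i) it lies close in TV to a genuine element of $P_{EOC}$ rather than being merely bin-wise independent, (ii) its bin-wise squared distance to $p$ is exactly what the statistic $T$ is designed to estimate, and (iii) the three $L/d$ discretization slacks accumulate into the stated $3L/d$ and nothing more. Keeping this Lipschitz bookkeeping consistent across the three triangle inequalities, and verifying that the constant $Z$ depends only on the finite alphabet of $(V,A)$ rather than on $|\mathbb{D}|$ or $d$, is the delicate step; part (1) by contrast reduces to a fairly routine Lipschitz plus U-statistic computation.
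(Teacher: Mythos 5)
Your proposal is correct and follows essentially the same route as the paper's proof: part (1) bounds the bin-averaged non-independence by $L/d$ via Lipschitzness and Jensen's inequality and then squares and sums over bins, and part (2) lower-bounds the pointwise $\ell_1$ non-independence by $\epsilon$ (the product-of-conditional-marginals distribution being itself in $P_{EOC}$), pays three $L/d$ discretization penalties via triangle inequalities, and converts to the squared statistic by Cauchy--Schwarz with the alphabet-size factor absorbed into $Z$. The only minor divergence is that in the paper's proof the constant works out to $Z = |\mathbb{D}|/(4|\mathcal{A}||\mathcal{V}|)$, i.e., it does scale with $|\mathbb{D}|$ rather than depending only on the finite alphabet of $(V,A)$ as you anticipate; since the lemma only asserts the existence of some constant $Z$, this is immaterial to correctness.
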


Lemma~\ref{lemma:justify_discrete} indicates that we can approximately evaluate EOC for $p$ by measuring the independence within each bin. If $V \indep A$ almost holds in each bin, such that $\mathbb{E}[T]$ is small enough, then it is highly possible that $V \indep A | Y$, i.e., $p$ satisfies EOC. Moreover, an increasing function of $\epsilon$ is upper bounded by $\mathbb{E}[T]$. As $\mathbb{E}[T]$ increases, it is highly likely that $p$ is further away from any $p'$ that satisfies EOC. This lemma serves as a theoretical foundation for the proposed method below to improve EOC and for adopting $T$ as an evaluation metric in experiments.

In order to calculate $\mathbb{E}[T]$ from data, we introduce unbiased estimators of $T$ in Appendix \ref{detail: T}. According to the central limit theorem, we could estimate $\mathbb{E}[T]$ through a sufficient number of random samplings. Whereas for the sake of efficiency, we prefer to construct $T$ with bounded variance such that $\mathbb{E}[T]$ could converge through limited repeated samplings.


\squeezeup

\section{Improving Equal Opportunity of Coverage}
\label{sec04:method}

In this section, we introduce a post-processing approach where we have a trained ML model, a calibration dataset, and a test dataset for which we aim to improve EOC. It consists of three steps. First, we enforce the independence of $A$ and $V$ within each interval of $Y$ (i.e., EOC) for the calibration data, and then leverage conformal prediction to achieve EOC for test data. Lastly, we describe an efficient and robust optimization approach that optimizes both EOC and widths of prediction intervals.
\subsection{Improving EOC on calibration data}
\label{sec: method1_EOC}
The first step aims to improve EOC on the calibration data where we have ground-truth labels. 
As the target variable $Y$ is continuous, the number of samples with certain values of $Y$ can be extremely small, thus calibrating for each distinct value of $Y$ is almost impossible. Meanwhile, commonly used methods designed for continuous variables such as adversarial learning~\cite{zhang2018mitigating, delobelle2021ethical}, which intend to learn a near-optimal $p$, are computationally inefficient for post-processing approaches~\cite{wongfast}.



According to Lemma~\ref{lemma:justify_discrete}, $\mathbb{E}[T]$ is an upper bound of an increasing function of $d_{TV}(p, p')$. Thus, this indicates that if $\mathbb{E}[T]$ decreases, the maximum distance between $p$ and $p'$ is reduced, resulting in an improvement in EOC. Informed by this idea, we first focus on enhancing EOC on the calibration data $\mathbb{D}_{c}$. For simplicity, we employ the framework introduced in \cite{romano2019conformalized} as the conformal prediction model, though our post-processing method is applicable to any base model. Specifically,
let $\hat{q}_{\alpha}$ denotes the $\alpha$-th conditional quantile regression function, i.e., for $i$-th sample $(X_i, Y_i)$, $\hat{q}_{\alpha}(X_i) := \inf \{ y\in \Delta \mathcal{Y}: Pr\{Y_i \leq y | X = X_i\} \geq \alpha \}$. Fix the lower and upper quantiles as $\alpha_{lo} = \alpha/2$ and $\alpha_{hi} = 1 - \alpha/2$, then $\hat{q}_{\alpha_{lo}}(X_i)$ and $\hat{q}_{\alpha_{hi}}(X_i)$ denote lower and upper quantile regression functions, respectively. The base model is trained on the training data and used for inference on calibration and test data. Following the discretization idea in Lemma~\ref{lemma:justify_discrete}, we divide the continuous variable $Y\in[Y_{min}, Y_{max}]$ in $\mathbb{D}_{c}$ into $M$ bins with equal sample sizes, and the $m$-th bin is denoted as $B_m = [ Y^{-}_m, Y^{+}_m )$.

To enhance EOC on $\mathbb{D}_{c}$, we can minimize $\mathbb{E}[T]$ by enforcing the independence of $A$ and $V$ within all discretized bins. In particular, we fix the coverage rates within each bin as ${\beta}_m \in [0,1]$, therefore $V \indep A | Y \in B_m$. Note that, although the coverage rates are equal across groups within the same bin, the quantile value at ${\beta}_m$ are computed separately for each group, as illustrated in Figure \ref{Fig: quantile value}. We first calculate the vanilla prediction intervals $\hat{C}(X_i) = [\hat{q}_{\alpha_{lo}}(X_i), \hat{q}_{\alpha_{hi}}(X_i)]$ obtained from the trained model, and get the conformity score $S(X_i, Y_i) = \max(\hat{q}_{\alpha_{lo}}(X_i) - Y_i, Y_i - \hat{q}_{\alpha_{hi}}(X_i))$ for all samples in $\mathbb{D}_{c}$. Then for different combinations of bins and protect groups, we calculate the quantile value of conformity scores at coverage rate ${\beta}_m$, $\hat{Q}_{\beta}(S,\mathbb{D}_{c}(a,m))$, for data in $\mathbb{D}_{c}(a,m) = \{i| i \in \mathbb{D}_{c}, A_i = a, Y_i \in B_m \}$. To simplify notations, we substitute $\hat{Q}_{\beta}(S,\mathbb{D}_{c}(a,m))$ by $G_{a,m}(\beta_m)$.
Since each bin has an equal sample size and coverage rate ${\beta}_m$, the average coverage rate $\sum_{m}{{\beta}_m} / M$ is then set to $1 - \alpha$ to keep a coverage rate of $1 - \alpha$ on the calibration data. 

\squeezeup

\begin{figure}[htbp]
    \centering
    \subfigure[Bin 1]{
        \includegraphics[width=0.30\textwidth]{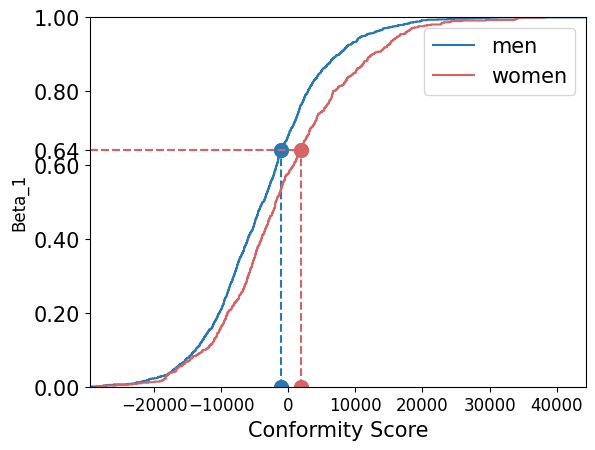}
    }
    \subfigure[Bin 2]{
        \includegraphics[width=0.30\textwidth]{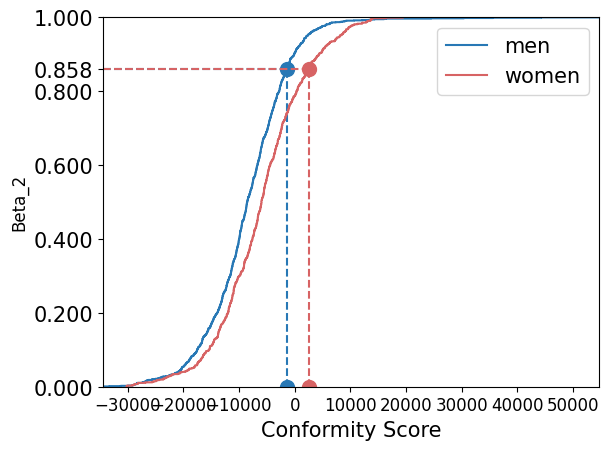}
    }
    \subfigure[Bin 10]{
        \includegraphics[width=0.30\textwidth]{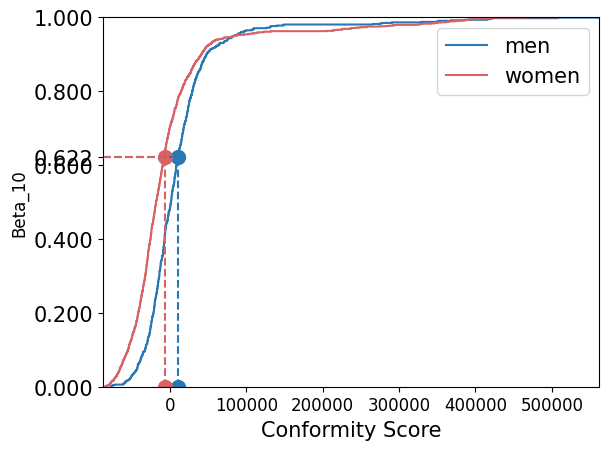}
    }
    \caption{Empirical Cumulative Distribution Functions (ECDF) of conformity scores of men (blue) and women (red) within bins 1,2 and 10 on the \textit{Adult} dataset. The desired coverage rates within different bins could be set at different levels but should be equal across groups within the same bin, as indicated by the overlapping horizontal dashed lines. Due to the distinct disparity between the ECDF of men and women, the same coverage rate is mapped to different quantile values on the x-axis.}
    \label{Fig: quantile value}
\end{figure}

\squeezeup

Through the reconstruction of prediction intervals $C(X_{i})$ with $G_{a,m}(\beta_m)$, i.e., $\forall i \in \mathbb{D}_{c}(a,m)$, $C(X_{i}) = [ \hat{q}_{\alpha_{lo}}(X_{i}) - G_{a,m}(\beta_m), \hat{q}_{\alpha_{hi}}(X_i) + G_{a,m}(\beta_m) ]$, EOC is enhanced and the marginal coverage rate is guaranteed on the calibration data.

\subsection{For Coverage and Independence Guarantees}
\label{sec: method02}
This subsection seeks to preserve EOC and marginal coverage rate for test data based on results for calibration data through conformal prediction. The key is to find out, for a test sample with $A_{n+1}=a_{n+1}$, which bin $B_m$ it belongs to. We can then calibrate the test sample with the quantile $G_{a,m}(\beta)$. However, direct calibration based on the predicted bin would not improve EOC since the prediction result can be biased due to, e.g., skewed distributions for different groups~\cite{mehrabi2021survey}.

To address this issue, we propose Binned Fair Quantile Regression (BFQR) (see Algorithm \ref{algo1} in Appendix \ref{subsec:algo1}). Our method could be treated as a variant of Mondrian conformal prediction~\cite{bostrom2020mondrian,fontana2023conformal}, where the confidence in each bin is evaluated independently. Suppose that a new data point with feature $X_i$ and protected attribute $a$ falls into a certain bin $B_m$, we calibrate it with the corresponding quantile value $G_{a,m}({\beta}_m)$. Then, we obtain a sub-interval of prediction within bin $B_m$, i.e., ${C}_m(X_i) = B_m \cap [ \hat{q}_{\alpha_{lo}}(X_i) - G_{a,m}(\beta_m), \hat{q}_{\alpha_{hi}}(X_i) + G_{a,m}(\beta_m) ]$. After computing ${C}_m(X_i)$, a union of all subsets $C(X_i) =  \bigcup_{m} C_m(X_i)$ is then the prediction interval of $X_{i}$. Under a mild exchangeability assumption similar to \cite{romano2020malice}, our algorithm provides both the marginal coverage guarantee and fair coverage guarantees within each bin.

\begin{assumption}[Exchangeability]
\label{assump: exchan}
  All calibration data $(X_i, Y_i)$, $i = 1, ..., n$ and a sample of test data $(X_{n+1}, Y_{n+1})$ are exchangeable conditioned on $A_{n+1} = a$ and $Y_{n+1} \in B_m$,  and conformity scores $\{ S(X_i,Y_i) , i \in \mathbb{D}_{c}(a,m) \cup \{n+1\} \}$ are almost surely distinct.
\end{assumption}

\begin{theorem}[Bin Coverage and Independence Guarantee] 
\label{theo: bin}
Under Assumption \ref{assump: exchan}, $\beta_m\leq Pr\{ Y_{n+1} \in C(X_{n+1}) | A_{n+1} = a, Y_{n+1} \in B_m\} \leq \beta_m + 1/ (|\mathbb{D}_{c}(a,m)|+1) $.
The expectation of max coverage gap inside $m$-th bin is upper bounded by $ max_a \{ 1/ (|\mathbb{D}_{c}(a,m)|+1) \}$. 
\end{theorem}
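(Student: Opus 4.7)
The statement is essentially the split conformal prediction coverage guarantee applied separately inside each subgroup $\mathbb{D}_c(a,m)$, in the spirit of Mondrian conformal prediction. The plan is to reduce the coverage event $\{Y_{n+1} \in C(X_{n+1})\}$ conditional on $A_{n+1}=a$ and $Y_{n+1}\in B_m$ to a statement about the rank of the conformity score $S(X_{n+1},Y_{n+1})$ among the exchangeable scores $\{S(X_i,Y_i): i\in\mathbb{D}_c(a,m)\}$, and then invoke the classical quantile lemma for exchangeable variables.

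\textbf{Step 1: event reduction.} Fix $a$ and $m$. Since the bins $\{B_m\}$ are disjoint and $C_{m'}(X_{n+1})\subseteq B_{m'}$, on the event $\{Y_{n+1}\in B_m\}$ we have
\[
\{Y_{n+1}\in C(X_{n+1})\}\cap\{Y_{n+1}\in B_m\} = \{Y_{n+1}\in C_m(X_{n+1})\}.
\]
Moreover, by the definition of the conformity score $S(x,y)=\max(\hat q_{\alpha_{lo}}(x)-y,\, y-\hat q_{\alpha_{hi}}(x))$, the inclusion $Y_{n+1}\in[\hat q_{\alpha_{lo}}(X_{n+1})-G_{a,m}(\beta_m),\,\hat q_{\alpha_{hi}}(X_{n+1})+G_{a,m}(\beta_m)]$ is equivalent to $S(X_{n+1},Y_{n+1})\le G_{a,m}(\beta_m)$. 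Intersecting with $\{Y_{n+1}\in B_m\}$ therefore gives
\[
\{Y_{n+1}\in C_m(X_{n+1})\} = \{S(X_{n+1},Y_{n+1})\le G_{a,m}(\beta_m)\}\cap\{Y_{n+1}\in B_m\}.
\]

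\textbf{Step 2: invoke the exchangeable quantile lemma.} By Assumption \ref{assump: exchan}, conditional on $A_{n+1}=a$ and $Y_{n+1}\in B_m$ the conformity scores $\{S(X_i,Y_i):i\in\mathbb{D}_c(a,m)\cup\{n+1\}\}$ are exchangeable and a.s. distinct, so the rank of $S(X_{n+1},Y_{n+1})$ is uniform on $\{1,\dots,|\mathbb{D}_c(a,m)|+1\}$. The standard conformal quantile lemma (e.g., Lemma 1 of Romano et al., 2019, or Lemma 2 of Tibshirani et al., 2019) then yields
\[
\beta_m \le \Pr\bigl\{S(X_{n+1},Y_{n+1})\le G_{a,m}(\beta_m)\,\bigm|\,A_{n+1}=a,\,Y_{n+1}\in B_m\bigr\} \le \beta_m + \frac{1}{|\mathbb{D}_c(a,m)|+1}.
\]
Combining this with Step 1 and the observation that $\{Y_{n+1}\in B_m\}$ has probability one under the conditioning gives the stated two-sided bound on $\Pr\{Y_{n+1}\in C(X_{n+1})\mid A_{n+1}=a,\,Y_{n+1}\in B_m\}$.

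\textbf{Step 3: bounding the max coverage gap.} Since for every $a$ the coverage lies in the interval $[\beta_m,\ \beta_m+1/(|\mathbb{D}_c(a,m)|+1)]$, the triangle inequality immediately gives, for any two protected values $a,a'$,
\[
\bigl|\Pr\{\cdot\mid a,m\} - \Pr\{\cdot\mid a',m\}\bigr| \le \max_a \frac{1}{|\mathbb{D}_c(a,m)|+1},
\]
and taking expectation (the bound is deterministic once the calibration sizes are fixed; otherwise take expectation over the random sizes) yields the second claim.

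\textbf{Main obstacle.} The only real subtlety is the event-reduction in Step 1: one must confirm that replacing $C(X_{n+1})=\bigcup_{m'}C_{m'}(X_{n+1})$ by the single slice $C_m(X_{n+1})$ is legitimate once we condition on $Y_{n+1}\in B_m$. The disjointness of the $B_{m'}$'s (since each $B_{m'}$ is a half-open interval $[Y^-_{m'},Y^+_{m'})$) and the intersection with $B_{m'}$ inside the definition of $C_{m'}$ are precisely what makes this clean; after that, everything is the standard split conformal argument applied within the stratum defined by $(a,m)$.
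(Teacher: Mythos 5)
Your proof is correct and follows essentially the same route as the paper's: condition on the stratum $(a,m)$, reduce the coverage event to $\{S(X_{n+1},Y_{n+1})\le G_{a,m}(\beta_m)\}$, apply the standard exchangeability/quantile lemma for the two-sided bound, and deduce the max-gap bound from the fact that every group's coverage lies in an interval of length at most $\max_a 1/(|\mathbb{D}_c(a,m)|+1)$. Your Step 1 in fact spells out the event reduction (disjointness of bins, equivalence with the conformity-score inequality) more explicitly than the paper does, which is a welcome addition rather than a deviation.
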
 

\begin{theorem}[Marginal Coverage Guarantee]
\label{thoe: marigin}
If we have $\sum_{m}{{\beta}_m} / M= 1- \alpha$, under Assumption \ref{assump: exchan}, 
then $Pr\{ Y_{n+1} \in C(X_{n+1}) \} \geq 1 - \alpha $.
\end{theorem}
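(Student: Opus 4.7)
The plan is to start from the marginal probability of coverage and decompose it over the joint partition of the sample space induced by the protected attribute $A_{n+1}$ and the bin index containing $Y_{n+1}$. Concretely, I would write
\begin{equation*}
Pr\{Y_{n+1} \in C(X_{n+1})\} = \sum_{a, m} Pr\{Y_{n+1} \in C(X_{n+1}) \mid A_{n+1}=a, Y_{n+1} \in B_m\} \cdot Pr\{A_{n+1}=a, Y_{n+1} \in B_m\}.
\end{equation*}
The events $\{A_{n+1}=a, Y_{n+1} \in B_m\}$ form a partition, so this identity is just the law of total probability and requires no structural assumption on $C$.

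The next step is to invoke Theorem \ref{theo: bin}, which under Assumption \ref{assump: exchan} lower-bounds each conditional coverage probability by $\beta_m$. Substituting yields
\begin{equation*}
Pr\{Y_{n+1} \in C(X_{n+1})\} \;\geq\; \sum_{a, m} \beta_m \cdot Pr\{A_{n+1}=a, Y_{n+1} \in B_m\} \;=\; \sum_{m} \beta_m \cdot Pr\{Y_{n+1} \in B_m\},
\end{equation*}
where the second equality uses the fact that $\beta_m$ depends only on the bin and marginalizes the joint probability over $a$.

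The final step is to show $Pr\{Y_{n+1} \in B_m\} = 1/M$ for each $m$, after which the constraint $\sum_m \beta_m / M = 1-\alpha$ closes the argument. This follows from the construction of the bins: the bin boundaries $[Y_m^-, Y_m^+)$ are defined so that each bin contains an equal number of calibration samples, and Assumption \ref{assump: exchan} makes $(X_{n+1}, Y_{n+1})$ exchangeable with the calibration points. Hence $Y_{n+1}$ is a priori equally likely to fall into any of the $M$ bins; this step mirrors the standard rank-based argument used to derive marginal coverage in split conformal prediction.

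The main obstacle is a clean handling of this last equal-mass step. The bins are data-dependent objects constructed from $\mathbb{D}_c$, so $Pr\{Y_{n+1} \in B_m\}$ is not literally $1/M$ for arbitrary distributions -- it is $1/M$ only in the exchangeable sense, i.e., averaged over the random draw of calibration and test together. I would therefore state the result in this averaged form (as in standard conformal guarantees) and note that, analogous to Theorem \ref{theo: bin}, a $1/(M+1)$-type correction can appear depending on whether ties and boundary conventions are handled via floor or ceiling of the empirical quantile. With that care in place, combining the three displays gives $Pr\{Y_{n+1} \in C(X_{n+1})\} \geq \frac{1}{M}\sum_m \beta_m = 1-\alpha$, as desired.
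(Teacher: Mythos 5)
Your proof takes essentially the same route as the paper's: the law of total probability over the joint partition by $(a,m)$, the per-bin lower bound $\beta_m$ from Theorem~\ref{theo: bin}, and the equal-mass property $Pr\{Y_{n+1}\in B_m\}=1/M$ (in the exchangeable, averaged sense) to close with the constraint $\sum_m \beta_m/M = 1-\alpha$. You are in fact more careful than the paper, whose displayed chain ends abruptly with what appears to be a typo (``$=\beta_m$'') and leaves the data-dependent-bins / exchangeability subtlety that you flag entirely implicit.
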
 

Here we sketch the proofs. Take $(X_{i}, A_{i}, Y_{i})$ as new sample from test data, $i \in \mathbb{D}_t$. Suppose $Y_i \in B_m$, then the $p$-value of the null hypothesis $Y_i \in B_m$ is given by $\hat{u}_{A_i,m} = \frac{1 + |j\in \mathbb{D}_c(A_i,m): S(X_j, Y_j) \leq S(X_i, Y_i)|}{1+ |\mathbb{D}_c(A_i,m)|}$, which is the proportion of conformity scores that are less than the score of the new sample among all calibration data that have the same protected attribute and fall into the same bin. 
The prediction interval $\hat{C}_m(X_i)$ in $B_m$ is the intersection of $B_m$ and the prediction interval calibrated by $G_{a,m}(\beta_m)$, which includes the part of $B_m$ in the prediction interval where the $p$-value $\hat{u}_{A_i,m}$ is greater than $\beta_m$. 
This guarantees the bin coverage of $B_m$ at level $\beta_m$ in Theorem~\ref{theo: bin}. The complete proofs can be found in Appendix~\ref{proof: method02}.
\squeezeup
\subsection{Constrained Optimization}
\label{sec: method03}
With improved EOC and coverage guarantee, we now need to identify ${\beta}_m$ for each bin so that the mean width of prediction intervals for $\mathbb{D}_{t}$ is the smallest. As such, our goal is to solve a constrained optimization problem, where the decision variables are the coverage rates in each bin $\beta_m$, $m= 1, \dots, M$, and the objective function is the mean width of prediction intervals in test data:
\begin{equation}
\begin{split}
&\min \,\, \sum_{m, i \in |\mathbb{D}_{t}|} { |C_m(X_{i})| } / |\mathbb{D}_{t}| \\
\text{s.t.} & \quad  \left\{\begin{array}{lc}
\sum_{m}{{\beta}_m} / M= 1- \alpha, \quad \\
{\beta}_m \in [0,1], \quad m  = 1, \dots , M. \end{array}\right.
\end{split}
\label{eq:opt}
\end{equation}
An easy solution to the optimization problem Eq. \ref{eq:opt} can be obtained by adjusting $\hat{C}(X_i)$, $X_i \in \mathbb{D}_{c}$ with split conformal prediction described in Section \ref{sec02:prelim}. However, it is not optimal since $\beta_m$ is determined by $\hat{Q}_{1 - \alpha}(S,\mathbb{D}_{c})$ calculated on all calibration data, but different bins have varying costs of width associated with changes in their coverage rates. For instance, if a bin has a coverage rate of 0.98, increasing it to 0.99 would lead to a significant increase in its width, whereas increasing the coverage rate of a bin from 0.50 to 0.51 would result in only a minor increase in width. Therefore, we use the solution of split conformal prediction as the initialization and then optimize it.


Solving Eq. \ref{eq:opt} is challenging in that it is computationally expensive or even infeasible to compute the value and the gradient of the objective function. First, the computation of prediction interval $C(X_{i})$ involves multiple intersection and union operations, which is a complex step function of $\beta_m$. Second, prediction interval $C(X_{i})$ is related to quantile value $G_{a,m}$, which is estimated from data with noise. Directly using slopes of $G_{a,m}$ as the gradient methods could result in over-fitting to noise, as shown in Section~\ref{exp: opt}. Third, We cannot assume the objective function's convexity or differentiability as the data may come from any possible distribution, and sorting is involved in calculating $G_{a,m}$~\cite{einbindertraining}.
To address those obstacles, we propose an efficient and robust optimization algorithm (detailed in Algorithm~\ref{algo2} in Appendix \ref{subsec:algo2}) that utilizes a relaxed upper bound and optimizes through approximated subgradients~\cite{larsson1996conditional}.

The steps of our approach are described below. First, to accelerate computing, we use a dummy continuous prediction interval $C^{d}(X_i) = Convex (\bigcup_{m} C_m(X_{i}))$, i.e., the convex hull of all sub-intervals, as an upper bound to substitute the original prediction interval $C(X_{i})$ in the objective function. However, $C^{d}(X_i)$ is still related to noisy $G_{a,m}$. To address this, in each iteration, we compute the slope for each bin $m$ in increasing and decreasing directions, denoted as $\hat{t}^{+}_{m}$ and $\hat{t}^{-}_{m}$, respectively. Since the objective function aims to decrease without changing the marginal coverage rate, we take a greedy strategy by moving up a step $\eta$ in the bin with the steepest descendent direction $\max_m\{\hat{t}^{-}_{m}\}$, meanwhile taking a step $\eta$ down the slowest ascendant direction $\min_m\{\hat{t}^{+}_{m}\}$. We stop until $\max_m\{\hat{t}^{+}_{m}\} \geq \min_m\{ \hat{t}^{-}_{m} \} + 2\varepsilon$, where $\varepsilon$ is an appropriate estimation error bound related with $|\mathbb{D}_c|$~\cite{BERTI1997385,boyd2003subgradient}. More details can be found in Appendix~\ref{detail: opt}. 

The proposed method can be viewed as a subgradient method, incorporating considerations for quantile value estimation errors and maintaining a constant mean coverage rate. Combining the three steps, we could get prediction intervals with improved EOC, guaranteed marginal coverage, and decreased average width of prediction intervals.

\section{Experiments}
\label{sec04:exp}
In this section, we conduct three sets of experiments on both synthetic and real-world data to evaluate (1) the effectiveness of the proposed approach for achieving EOC (Section~\ref{subsec: result}) ; (2) the impact of the key parameter $M$, the number of bins (Section~\ref{exp: para}); and (3) the effectiveness and efficiency of the proposed optimization framework (Section~\ref{exp: opt}). 

\subsection{Experimental Setup}

We propose two metrics to evaluate the first property of EOC, i.e., whether coverage
rates for different groups with similar outcomes are close. With the $M$ discretized bins in Section \ref{sec: method1_EOC}, the first metric is the average of maximum difference in coverage rates between groups for all bins, similar to the definition of conditional KS distance in~\cite{hardt2016equality}. In main experiments, $M$ is set as 20.
The second metric $T$ is introduced in Lemma \ref{lemma:justify_discrete}, where $T$ is calculated on $d = \lceil |\mathbb{D}_t|^{2/5} \rceil$ bins. Specifically, $d \neq M$ for fairness of evaluation, e.g., $d = 100$ for synthetic data. A smaller $T$ implies a closer distance to the ideal EOC distribution and, therefore a better EOC. Furthermore, for the second property of EOC, we need to ensure the desired marginal coverage rates.
The efficiency of uncertainty estimation is measured by the width of the prediction interval. In addition, we check the conditional coverage rates on groups to measure equalized coverage. All metrics related to coverage rates are multiplied by 100 in tables to exhibit significant differences. Every experiment is repeated 100 times on random divisions of data with different seeds, with $|\mathbb{D}_{tr}|: |\mathbb{D}_{c}|: |\mathbb{D}_{t}| = 3:1:1$.

We compare our method with the following state-of-the-art methods: 1) Split Conformalized Quantile Regression (CQR)~\cite{romano2019conformalized, angelopoulos2021gentle} with only marginal coverage guarantee; 2) Group-conditional Conformalized Quantile Regression (GCQR)~\cite{romano2020malice} with both marginal and conditional coverage guarantee; 3) MultiValid Predictor (MVP)~\cite{bastani2022practical} with both marginal and conditional coverage guarantee. Note that the base model for this algorithm is trained on the union of training and calibration data as MVP does not require any calibration data; 4) Conformal Fair Quantile Prediction (CFQR)~\cite{liu2022conformalized}, which guarantees marginal coverage and demographic parity on both upper and lower bounds of the predicted intervals; 5) Label-conditional Conformalized Quantile Regression (LCQR)~\cite{romano2020classification}, which is designed for classification problems to provide marginal coverage and equalized coverage for each class. We adapt it to regression tasks. The base model for all compared conformal prediction methods is set as the QR model at the level of 0.05 and 0.95, and the desired marginal coverage is set to 0.9. 
Considering that for some real-world applications like scoring~\cite{angwin_larson_kirchner_mattu_2016}, disjoint prediction intervals make little sense, we evaluate prediction intervals $C(X)$ along with their dummy prediction intervals $C^{d}(X)$ in Section~\ref{sec: method03}, represented as BFQR and BFQR*. In the comparison tables, the best results and the second-best results are highlighted in bold and underlined, and undercovered groups who fail to reject the null hypothesis at 0.05 level in one-tailed t-tests are emphasized in Italian.

\subsection{Results}
\label{subsec: result}
\subsubsection{Synthetic data}
We generate ten independent and exponentially distributed features with the scale of 1, $ X = (X_1, \dots, X_{10})$; protected attribute $A$ is randomly selected from $\{0,1,2$\} with a probability of 0.1, 0.2, 0.7, respectively. The labels $Y$ for $A=1$ follow a random distribution, thus impossible to predict; labels $Y$ for the other two groups are the linear summations of $X$ and $A$, plus noises that increase with $Y$. A size of 100,000 samples are generated from this distribution and the data generating process is detailed in Appendix~\ref{subsec: syn gen}.

We have the following observations from the results in Table \ref{tab: syn}: 1) All methods have marginal coverage guarantee, which is attributed to statistical guarantee from conformal prediction.
2) Our proposed methods, BFQR with disjoint intervals and BFQR* with joint intervals achieve the best trade-off between EOC and equalized coverage. In particular, our method achieves the second-best EOC (i.e., Mean Max Coverage Gap and $T$), meanwhile, the conditional coverage rates are almost equal across different groups. While CFQR has significantly better performance w.r.t. EOC, the conditional coverage for $A=1$ (the most challenging case) is extremely low compared to $A=0$ and $A=2$. This result aligns with the mutual exclusivity between EOC and equalized coverage formulated in Theorem~\ref{thm:exclu}. 
3) Among all methods, the average interval width of our method is the smallest, validating the effectiveness of the optimization process in Section~\ref{sec: method03}. One of the main advantages of BFQR and BFQR* is optimizing through bin coverages. In this process, bins that sacrifice interval width for an over-coverage rate are adjusted to a lower but satisfactory coverage rate. Therefore, we are able to guarantee a smaller average interval width while improving the EOC.

\begin{table}[ht]
\caption{Experiments results for synthetic data.}
\centering
\scalebox{0.85}{
\begin{tabular}{@{}cccccccc@{}}
\toprule
\multirow{2}{*}{Method} & \multicolumn{3}{c}{EOC}                                                                                                                & \multirow{2}{*}{Width $\downarrow$} & \multicolumn{3}{c}{Equalized Coverage}                                                                                                                                                        \\ \cmidrule(lr){2-4} \cmidrule(l){6-8} 
                        & \begin{tabular}[c]{@{}c@{}}Mean Max\\Coverage Gap $\downarrow$ \end{tabular} & $T$ $\downarrow$      & \begin{tabular}[c]{@{}c@{}}Marginal \\ Coverage\end{tabular} &                        & \begin{tabular}[c]{@{}c@{}}Coverage\\ ($A =  0$)\end{tabular} & \begin{tabular}[c]{@{}c@{}}Coverage\\ ($A =  1$)\end{tabular} & \begin{tabular}[c]{@{}c@{}}Coverage\\ ($A =  2$)\end{tabular} \\ \midrule
CQR                     & 20.11±1.05                                                               & 306.33±18.78        & 89.99±0.32                                                   & \underline{16.02±0.04}                          & \textit{82.84±0.90 }                                                & 90.03±0.76                                                    & 91.00±0.29                                                    \\
GCQR                    & 14.48±1.36                                                               & 386.81±35.60        & 90.00±0.33                                                   & 16.21±0.06                          & 90.08±0.99                                                    & 89.92±0.64                                                    & 90.01±0.37                                                    \\
MVP                     & 9.03±1.43                                                                &  71.27±6.70        & 89.98±0.32                                                   & 16.31±0.13                          & 89.35±0.94                                                    & 90.11±0.54                                                    & 89.96±0.24                                                    \\
CFQR                    & \textbf{0.19±0.09}                                                                & \textbf{0.09±0.24}       & 89.93±0.20                                                   & 17.00±0.06                          & 93.89±0.53                                                    & \textit{71.38±0.87}                                                    & 94.74±0.24                                                    \\
LCQR                    & 6.38±0.26                                                                & 18.64±6.71        & 90.41±0.29                                                   & 17.32±0.09                          & 91.37±0.84                                                    & 90.55±0.68                                                    & 90.23±0.37                                                    \\
BFQR*                   & \underline{3.15±0.58}                                              & \underline{2.78±3.79}      & 91.74±1.28                                                   & 16.24±0.21                          & 93.81±1.03                                                    & 91.19±3.13                                                    & 91.60±0.90                                                    \\
BFQR                    & 3.82±0.45                                                                & 3.65±4.75        & 90.03±0.32                                                   & \textbf{15.96±0.13}                          & 91.99±1.16                                                    & 89.08±3.61                                                    & 90.03±1.04                                                    \\ \bottomrule

\end{tabular}
}
\label{tab: syn}
\end{table}

\squeezeup
\subsubsection{Real-world Data}
We further evaluate our method on two benchmark datasets: \textit{Adult}~\cite{plevcko2021fairadapt,ding2021retiring} where gender is the protected attribute and the outcome is salary; \textit{MEPS} (Medical Expenditure Panel Survey) data~\cite{meps,romano2020malice} where race is the protected attribute and the outcome is the health care system utilization score. 

We observe similar results: For Adult data (shown in Table \ref{tab: adult}), all methods achieve marginal coverage. Our methods achieve the best EOC and smallest mean width of prediction interval while maintaining competitive conditional coverage rates. CFQR with the best EOC on synthetic data does not have consistently good performance, and the mean interval width is greatly larger compared with other methods. The increased width of LCQR implies that as some bins are difficult to predict, enforcing all bins to reach the same high coverage rates generates prediction results with little useful information.
For MEPS data (shown in Table \ref{tab: meps}), the results are similar to those in synthetic data: our method is the second-best w.r.t. EOC, with marginal coverage rates guaranteed and a significantly smaller average prediction width.


\begin{table}[ht]
\caption{Experiments results for \textit{Adult} data.}
\label{tab: adult}
\centering
\scalebox{0.9}{%
\begin{tabular}{@{}ccccccc@{}}
\toprule
\multirow{2}{*}{Method} & \multicolumn{3}{c}{EOC}                                                                                                                                    & \multirow{2}{*}{Width $\downarrow$} & \multicolumn{2}{c}{Equalized Coverage}                                                                                \\ \cmidrule(lr){2-4} \cmidrule(l){6-7} 
                        & \begin{tabular}[c]{@{}c@{}}Mean Max\\Coverage Gap $\downarrow$\end{tabular} & $T$ $\downarrow$ & \begin{tabular}[c]{@{}c@{}}Marginal \\ Coverage\end{tabular} &                                     & \begin{tabular}[c]{@{}c@{}}Coverage\\ (Men)\end{tabular} & \begin{tabular}[c]{@{}c@{}}Coverage\\ (Women)\end{tabular} \\ \midrule
CQR                     & 5.01±0.40                                                                & 67.31±10.56       & 89.98±0.29                                                   & 93,951.11±409.25                     & 90.01±0.42                                               & 89.94±0.37                                                 \\
GCQR                    & 4.97±0.50                                                                & 67.30±10.63        & 89.99±0.29                                                   & 93,994.10±435.35                     & 89.98±0.41                                               & 89.99±0.46                                                 \\
MVP                     & 5.50±0.34                                                                & 211.01±19.44        & 90.05±0.28                                                   & 98,229.81±1,723.82                    & 90.07±0.13                                               & 90.08±0.27                                                 \\
CFQR                    & 3.57±0.53                                                                & 18.17±6.02       & 90.08±0.14                                                   & 147,253.27±499.47                    & 90.02±0.42                                               & 90.08±0.40                                                 \\
LCQR                    & 3.91±0.45                                                                & 5.03±4.42        & 90.50±0.30                                                   & 160,107.96±8929.07                   & 90.46±0.37                                               & 90.54±0.46                                                 \\
BFQR*                   & \textbf{2.90±0.37}                                                    & \textbf{3.60±3.29}        & 91.08±0.46                                                   & \underline{93,689.33±976.76}                     & 91.89±0.74                                               & 90.11±0.43                                                 \\
BFQR                    & \underline{3.05±0.35}                                                                & \underline{3.55±3.14}        & 90.32±0.28                                                   & \textbf{91,969.66±996.63}                     & 90.97±0.42                                               & 89.56±0.47                                                 \\ \bottomrule
\end{tabular}%
}
\end{table}

\begin{table}[ht]
\caption{Experiments results for \textit{MEPS} data.}
\centering
\scalebox{0.95}{
\begin{tabular}{@{}ccccccc@{}}
\toprule
\multirow{2}{*}{Method} & \multicolumn{3}{c}{EOC}                                                                                                                                    & \multirow{2}{*}{Width $\downarrow$} & \multicolumn{2}{c}{Equalized Coverage}                                                                                \\ \cmidrule(lr){2-4} \cmidrule(l){6-7} 
                        & \begin{tabular}[c]{@{}c@{}}Mean Max\\Coverage Gap $\downarrow$\end{tabular} & $T$ $\downarrow$ & \begin{tabular}[c]{@{}c@{}}Marginal \\ Coverage\end{tabular} &                                     & \begin{tabular}[c]{@{}c@{}}Coverage\\ (White)\end{tabular} & \begin{tabular}[c]{@{}c@{}}Coverage\\ (Non-white)\end{tabular} \\ \midrule
CQR                                                   & 6.47±1.22           & 7.62±4.28                                            & 89.85±0.77 & 32.06±1.85   & 89.81±0.99                                                 & 89.90±0.88                                                     \\
GCQR                                               & 6.54±1.79            & 9.50±8.35                                              & 89.91±0.74 & 32.08±1.90   & 89.91±0.95                                                 & 89.93±1.11                                                     \\
MVP                                                      & 8.27±2.37        & 8.78±3.47                                          & 89.95±0.84 & 41.08±6.87   & 90.97±0.71                                                 & 90.75±0.97                                                     \\
CFQR                                                    & \textbf{0.94±0.39}         & \textbf{0.28±0.55}                                            & 90.88±0.51 & 36.55±1.85   & \textit{87.93±1.06}                                              & 93.18±0.78                                                     \\
LCQR                                                     & 5.29±0.97           & 3.29±2.52                                         & 91.97±0.69 & 160.65±30.71 & 91.59±0.94                                                 & 92.57±1.02                                                     \\
BFQR*                                                    & \underline{3.04±0.64}            & \underline{1.20±1.70}                                        & 92.33±0.75 & \underline{23.95±2.53}   & 93.83±0.84                                                 & 89.94±1.16                                                     \\
BFQR                                                      & 3.99±0.76           & 2.27±2.08                                        & 91.05±0.80 & \textbf{23.11±2.41}   & 92.66±0.85                                                 & \textit{88.46±1.34}                                                     \\ \bottomrule
\end{tabular}}
\label{tab: meps}
\end{table}

\squeezeup
\subsubsection{The Impact of \texorpdfstring{$M$}{Lg} }
\label{exp: para}

\begin{wrapfigure}{R}{0.4\textwidth}
\vspace{-0.4in}
  \begin{center}
    \includegraphics[width=0.4\textwidth]{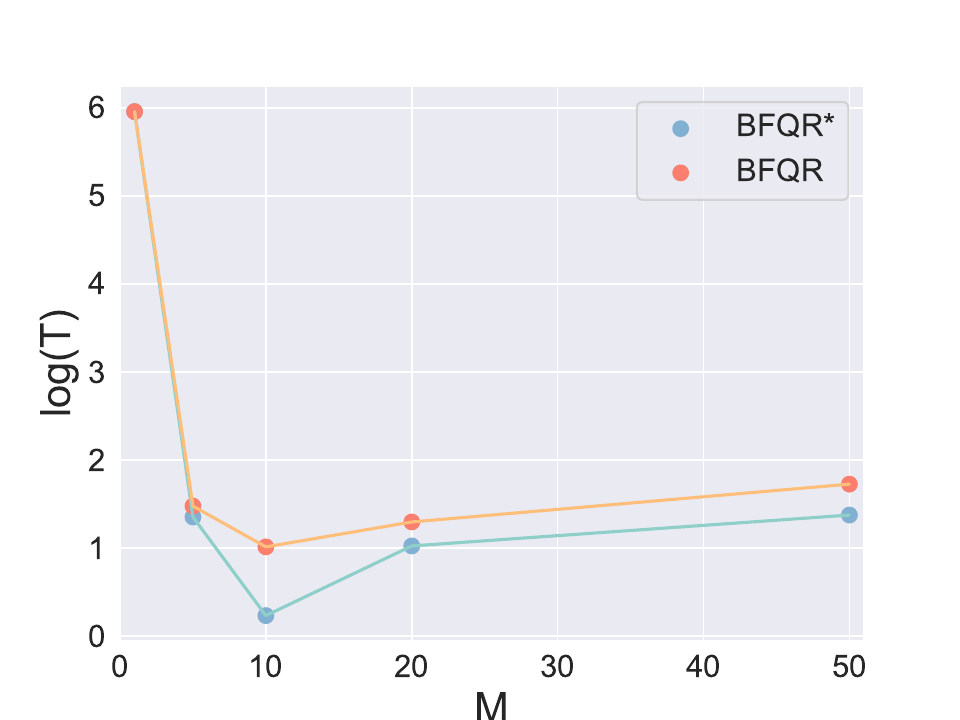}
  \end{center}
  \vspace{-0.1in}
  \caption{Impact of $M$ on EOC.}
  \label{Fig: para}
\end{wrapfigure}

The number of bins, $M$, is the only primary parameter in our method. Using synthetic data that ideally satisfies Assumption~\ref{assump: exchan}, we evaluate how $M$ influences EOC. In our experiment, $M$ varies among \{1, 5, 10, 20, 50\}. When bin size is 1, our method degenerates to GCQR. The results are shown in Fig. \ref{Fig: para}, where the best EOC for BFQR and BFQR* is achieved with 10 bins.
When the number of bins is small, the sample size of $\mathbb{D}_{a,m}$ increases. According to Theorem~\ref{theo: bin}, the upper bound of the expectation of max group coverage rate gap is decreased. 
However, this only suggests a decrease in the expectation, not for every $Y$ within bins. When the quantile value is calculated on large samples, it fails to characterize the conformity scores of individuals. BFQR* with continuous intervals exhibits better EOC compared to BFQR as its coverage gaps are primarily influenced by the quantile values of the first and last bins, involving less randomness.
As $M$ greatly affects the performance of our methods, it should be cautiously chosen for various problem settings.

\subsubsection{Comparisons of Optimization Methods}
\label{exp: opt}

To demonstrate the efficiency of the proposed optimization method, we compare it with three recent optimization methods: 1) BFGS-SQP~\cite{liang2022ncvx,curtis2017bfgs}, a constrained nonsmooth optimization method, 2) Augmented Lagrange (AL)~\cite{wang2019global}, another method to solve constrained non-smooth optimization problems, and 3) Bayesian optimization~\cite{letham2019constrained,li2021openbox}, a global optimization method for solving problems with noisy objective functions. We evaluate these methods based on their performance in terms of EOC, width, and running times. The results on synthetic data are in Table \ref{tab: opt}. The Bayesian method has extremely slow computational speed and cannot guarantee marginal coverage. In comparison to BFGS-SQP and AL, our method achieves similar EOC performance but with slightly lower prediction interval widths, which confirms the effectiveness of our subgradient approximation. Additionally, our method is more efficient as it significantly reduces running times.

\begin{table}[ht]
\caption{Optimization results on synthetic data.}
\label{tab: opt}
\centering
\scalebox{0.9}{%
\begin{tabular}{@{}cccccc@{}}
\toprule
\multirow{2}{*}{\begin{tabular}[c]{@{}c@{}}Optimzation\\ Method\end{tabular}} & \multicolumn{3}{c}{EOC}                                                                                                                                    & \multirow{2}{*}{Width $\downarrow$} & \multirow{2}{*}{\begin{tabular}[c]{@{}c@{}}Running Time \\ (seconds)$\downarrow$\end{tabular}} \\ \cmidrule(lr){2-4}
                                                                              & \begin{tabular}[c]{@{}c@{}}Mean Max\\Coverage Gap $\downarrow$\end{tabular} & $T$ $\downarrow$ & \begin{tabular}[c]{@{}c@{}}Marginal \\ Coverage\end{tabular} &                                     &                                                                                                \\ \midrule
BFQS-SQP                                                                      & 4.86±0.75                                                                & 3.89±3.71        & 90.08±0.16                                                   & 16.15±0.06                          & 78.11±41.98                                                                                    \\
AL                                                                            & 4.81±0.73                                                                & 3.83±3.67        & 90.18±0.18                                                   & 16.16±0.06                          & 77.57±1.50                                                                                     \\
Bayesian                                                                      & 8.41±1.96                                                                & 4.47±5.63        & \textit{88.18±10.08}                                                  & 18.80±2.51                          & 805.81±152.05                                                                                  \\
BFQR                                                                          & \textbf{3.82±0.45}                                                                & \textbf{3.65±4.75}        & 90.03±0.32                                                   & \textbf{15.96±0.13}                          & \textbf{33.62±14.78}                                                                                    \\ \bottomrule
\end{tabular}%
}
\end{table}
\squeezeup
\squeezeup
\section{Discussion}
\label{sec05:dis}


\textbf{EOC or equalized coverage?} The results in Section \ref{subsec: result} clearly demonstrate that perfect EOC and equalized coverage are mutually exclusive, empirically verifying Theorem \ref{thm:exclu}.
In real-world applications, it becomes crucial to trade off between EOC and equalized coverage. We recommend placing a higher emphasis on EOC when certain labels are considered more favorable, e.g., when individuals labeled as low-salary are the most important subpopulation for decision-making. Under EOC, each group with the same label is treated equally, not only in terms of equal coverage rates (i.e., equal probability of being included in the prediction result) but also in terms of comparable prediction interval widths as a measure of uncertainty. For example, under equalized coverage, the coverage rate for men is around 0.65 but 0.55 for women in the first bin, low-income population. We cannot conclude that the model is more confident in the prediction results for low-income men solely based on the larger prediction intervals of men: this discrepancy might arise from the over-coverage of men. 
In this sense, our designed metric EOC, not only contributes to evaluating and enhancing fairness under uncertainty but also serves for fair uncertainty quantification, discovering model bias in uncertainty quantification. A possible application of EOC is for guiding sample selection in active learning~\cite{fu2013survey,anahideh2022fair}, allowing the applied model to label the most uncertain samples in a fair manner.

\textbf{Discretizition} into equal-mass bins is a favorable strategy to avoid excessively small sizes of $\mathbb{D}(a,m)$. While this guarantees a tighter upper bound for the expectation of the maximum coverage gap within bins according to Theorem \ref{theo: bin}, our method can be extended beyond equal-mass bins. When the number of samples in each bin is unequal, we adjust the constraints in Eq.~\ref{eq:opt} by incorporating a weighted sum of ${\beta}_m$, i.e., $\sum_{m \in [M]}{ \frac{ {\beta}_m }{M|\mathbb{D}(m)|} } = 1- \alpha$. Consequently, in Algorithm \ref{algo2}, the gradient in the direction of each bin also should be weighted by $|\mathbb{D}(m)|$. These adaptations allow our method to maintain its effectiveness when the sample sizes in each class are uneven, e.g., in imbalanced classification problems. Moreover, our method is able to address certain coverage rate concerns within particular bins, such as low-income females, by assigning specific coverage rates to the corresponding bins while allowing flexibility in adjusting the coverage rates of other bins.

\textbf{Why Better EOC?} Our methods have been empirically shown to have better EOC than methods designed for equalized coverage, and here we elucidate the reasons behind this improvement. 
Given the underlying distribution of the $\alpha$-quantile conformity scores in group $a$ conditioned on $y$ as $Q_{a,y}(\alpha)$. For every $y$ and $a$, calibrating with $Q_{a,y}(\alpha)$ would generate perfect EOC.
However, due to the limitations of available calibration samples, the estimation of $Q_{a,y}(\alpha)$ is subject to bias. Since the base model cannot perfectly capture the underlying distribution, the conformity score, as a measure of disagreement, must correlate with $Y$. While equalized coverage methods generally ignore this correlation, our method tries to handle it. In Section \ref{sec: method1_EOC}, we used $G_{a,m}(\beta_m)$ to estimate $Q_{a,y}(\beta_m)$ for all $y \in B_m$, which is the $\beta_m$-quantile value of conformity scores whose $y$ belong to bin $B_m$. The localized estimation within discretized bins is upper-bounded by the largest $\beta_m$-quantile value of score conditioned on any $y$ and lower-bounded by the smallest one, regardless of the unknown distribution of $Y$. Consequently, we could better characterize the conditional distribution within each bin.
This enhancement is especially evident when the Lipschitiz constant $L$ in Lemma \ref{lemma:justify_discrete} is small, such that the deviation from perfect EOC to the distribution generated by bins is negligible. 

\textbf{Possible Extension.} For future work, we aim to further improve our method by seeking a tighter bound to achieve optimal EOC. Possible direction may follow \cite{agarwal2019fair,chzhen2020fair}, by discretizing $Y$ into finite values and bounding deviation in EOC of the discretized predictor. Additionally, our conformalized method leverages statistical properties to ensure coverage rates in each bin and optimize prediction interval widths, making it particularly appropriate for large datasets. With a better discretization strategy that better utilizes the calibration samples, e.g., ensemble sampling, our method may maintain its effectiveness on a small calibration set.


\textbf{Potential Exploration beyond the Current Scope.} The context of this paper is situated in post-processing fairness. Though post-processing methods typically underperform in-processing methods, in-processing methods are not applicable in many situations, e.g., the prediction model is a pre-trained black-box regression model, or a flexible and computationally efficient method is required. Whereas, adapting the proposed method to an in-processing setting to obtain better EOC is a possible extension. It would also be interesting to extend our method to the context that sensitive attributes are not available in the test data, possibly through missing data augmentation for the calibration set~\cite{zaffran2023conformal} or prediction-powered inference~\cite{angelopoulos2023prediction}. 

\squeezeup
\section{Conclusion}
In this paper, we introduce a new uncertainty-aware fairness notion, equal opportunity of coverage (EOC), which addresses the limitations of the seminal work of equalized coverage \cite{romano2020malice}. EOC has several desired properties: It guarantees equal coverage rates for groups with the same labels and marginal coverage rate at a pre-determined level. It also ensures a small prediction interval. The theoretical analyses and empirical findings indicate that EOC and equalized coverage are generally incompatible. We suggest using EOC as an alternative to equalized coverage when equal coverage rates and assessment of uncertainty are needed for more fine-grained demographic groups. To improve EOC, we propose a distribution-free post-processing method, BFQR, based on discretization. Experimental results on synthetic and real-world datasets show that BFQR achieves competitive EOC and ensures guaranteed marginal coverage rates with small mean prediction interval widths compared to the state-of-the-art. Moreover, BFQR is adaptable to various settings, such as classification and other decision-making tasks.

\acksection

This work is supported in part by NSF under grant III-2106758. Lu Cheng is in part supported by the Cisco Research Gift Grant. We thank Xinhua Zhang for the helpful discussion on the optimization method. We are grateful to the anonymous reviewers at NeurIPS 2023 for providing valuable feedback and suggestions.


\bibliographystyle{unsrt}
\bibliography{ref}

\medskip

\newpage

\end{sloppypar}
\section{Supplementary Material}

\subsection{Details and Proofs for the Proposed EOC}

\subsubsection{Calculation of \texorpdfstring{$T$}{Lg}}
\label{detail: T}

Given data $\mathbb{D}$, disaggregate $Y$ into $M$ equal-size bins, and the $m$-th bin is denoted as $B_m$. Let $\sigma_m = |B_m|$ denote the number of samples in $B_m$. For distribution $p \in \Delta ( \mathcal{V} \times \mathcal{A} \times \mathcal{Y})$ conditioned on $y$ in $B_m$,
$p_{V,A|y_m}$, $p_{V|y_m}$ and $p_{A|y_m}$ are denoted as the joint distribution of $(V,A)$, marginal distribution of $V$ and $A$, respectively. Let $U_m$ denote an unbiased estimator of the $L_2^2$ distance between $p_{V|y_m}p_{A|y_m}$ and $p_{V,A|y_m}$, i.e.,
\begin{equation}
\label{eq: l2 distance}
    \left\| p_{V|y_m}p_{A|y_m} - p_{V,A|y_m} \right\|_{2}^2 = \sum_{v \in \{ 0,1\}} \sum_{a \in \Delta \mathcal{A}} (p_{V|y_m}(v)p_{A|y_m}(a) - p_{V,A|y_m}(v,a) )^2.
\end{equation} 

 As detailed in Section 5.1 of \cite{neykov2021minimax} and Algorithm 4 of \cite{canonne2018testing}, $U_m$ could be calculated through U-statistic. Specifically, in \cite{neykov2021minimax}, they consider designing kernel as $\phi_{ij}(av) = \mathbb{I}( A_i = a, V_i = v) - \mathbb{I}( A_i = a )\mathbb{I}( V_i = v )$, for $i$ and $j$-th sample in $\mathbb{D}_t$. Next, they take 4 distinct $i,j,k,l$-th observations from $\mathbb{D}_t$, and calculate the kernel function $h_{ijkl} = \frac{1}{4!}\sum_{\pi \in [4!]}\sum_{a \in \mathcal{A}, v \in \mathcal{V}} \phi_{{\pi}_1{\pi}_2}(av) \phi_{{\pi}_3{\pi}_4}(av)$, where $\pi$ is a permutation of $i,j,k,l$. Afterward, the kernel function is calculated on every 4 distinct samples from $\mathbb{D}_t$. This process is utilized to construct the U-statistic: $U(\mathbb{D}_t) := \frac{1}{\binom Zk} \sum_{ i<j<k<l} h_{ijkl}$, where $Z = |\mathbb{D}_t|$. This U-statistic is an unbiased estimator of $L_2^2$ distance in \ref{eq: l2 distance}. Poisson sampling is also introduced as an effective technique for reducing computation expenses while maintaining the desired unbiased properties. With this technique, they suggest to calculate $U(\mathbb{D}_t)$ with only $Z_p \sim Poi(Z/2)$ randomly selected samples. However, the algorithm based on the U-statistic still exhibits a time complexity of $O( \sigma_m!)$ in each bin and the expectation of sample number in each bin $\mathbb{E}[\sigma_m] = Z^{\frac{3}{5}}/2$. The total time complexity for each run is at least $O( (Z^{\frac{3}{5}}/2)!\cdot Z^{\frac{2}{5}})$, which is too computationally expensive for large datasets.
 
As an alternative to U-statistic, we turn to the V-statistic in~\cite{szekely2007measuring,szekely2013energy}, which could also serve as an unbiased estimator for $L_2^2$ distance. According to the Parseval-Plancherel formula, the Fourier transform of a power of a Euclidean distance, i.e., $L_2^2$ distance, is also a (constant multiple of a) power of the same Euclidean distance. Therefore, we could calculate the energy distance of independence, which is a weighted $L_2^2$ distance between the characteristic function of $V, A|y_m$ and the product of marginal characteristic functions of $V|y_m$ and $A|y_m$. Formally, let $\hat{f}_{\cdot}$ denote the characteristic function of any variable, the energy distance of independence between $V|y_m$ and $A|y_m$ is then defined as $\left\| \hat{f}_{V|y_m}\hat{f}_{A|y_m} - \hat{f}_{V, A|y_m} \right\|_{2}^2 = \int | \hat{f}_{V|y_m}\hat{f}_{A|y_m} - \hat{f}_{V, A|y_m} |^2 w(a,v) da dv$, where $w(a,v)$ is a weight function. According to \cite{szekely2013energy}, this distance is a scaled $L_2^2$ distance multiplied by a constant, which is only related to the dimensions of calculated variables. Since we only utilize distance for comparison and evaluation, the constant could be ignored. Adopting the similar Poisson sampling strategy, the V-statistic of energy distance in a bin of $\sigma_m$ samples could be computed in $O(\sigma_m^2)$. Specifically, in each bin, we first compute two $\sigma_m \times \sigma_m$ distance matrices $M^a$, $M^v$. $M^a_{i,j}$ and $M^v_{i,j}$, the elements $i$-th row and $j$-th column denotes the distance between $i$-th and $j$-th sample in $A$ and $V$, respectively. 
Then an unbiased estimator of independence distance in the bin could be calculated as $ \frac{1}{\sigma_m(\sigma_m - 1)} (\sum_{i,j = 1}^{\sigma_m} M^a_{i,j}M^v_{i,j} +  \frac{1}{(\sigma_m-2)(\sigma_m-3)} \sum_{i,j = 1}^{\sigma_m} M^a_{i,j}(M^v_{\cdot,\cdot} - 2M^v_{i,\cdot} - 2M^v_{\cdot,j} + 2M^v_{i,j}) - \frac{2}{\sigma_m -2}  \sum_{i,j = 1}^{\sigma_m} M^a_{i,j} ( M^v_{i,\cdot} - M^v_{i,j}) ) $, where $M^v_{\cdot,\cdot}$, $M^v_{i, \cdot}$ and $M^v_{\cdot, j }$ denotes the row $i$ sum, row $j$ sum and grand sum, respectively of the distance matrix $M^v$. Notably, though the unbiased estimators come with a faithful expectation, we suggest running the evaluation repeatedly (e.g., 10 times) to reduce the variance of estimators.

Regardless of which statistic is employed for estimation, each $U_m$ can be considered a local test of independence within the bin $B_m$. Since $\sigma_m$ must be larger than 4 to achieve an unbiased estimator, we eventually obtain the test statistic $T = \sum_m{ \mathbbm{1}(\sigma_m>4)\sigma_m U_m}$, and employ it as one of the evaluation metrics.

\subsubsection{Proofs of lemma \ref{lemma:justify_discrete}}

\begin{assumption}[Lipschitizness]
\label{assump: lip}
  For any distribution $p \in \Delta ( \mathcal{V} \times \mathcal{A} \times \mathcal{Y})$, we claim it to satisfy Lipschitizness if for all $y,y' \in \mathcal{Y}$, there exists a Lipschitizness constant $L$ such that $\left\| p_{A,V|Y=y} - p_{A,V|Y=y'} \right\|_1 \leq L|y-y'|$, where $p_{A,V|Y=y}$ denotes the conditional distribution of $A,V|Y=y$ under $p$.
\end{assumption}

\label{proof: discrete}

\begin{proof} [Proof of (1)]
    The first part of the lemma gives an upper bound of $\mathbb{E}(T)$ when $p$ satisfies EOC. Assume the smoothness conditions as defined in \cite{neykov2021minimax}, let $L$ denote the Lipschitzness constant. 

    Define $q_{av}(m) = \frac{\int_{B_m}p_{A,V|Y}(a,v|y) dP_{Y}(y) }{\mathbb{P}(Y \in B_m)} = \int_{B_m}p_{A,V|Y}(a,v|y) d \tilde{P}_Y(y)$,
    where $p_{A,V|Y}(a,v|y)$ is the conditional distribution of $A,V|Y=y$, and $P_Y(y)$ is the distribution of $Y$ that is absolutely continuous regarding the Lebesgue measure, and $d \tilde{P}_Y(y) = \frac{d P_Y(y)}{\mathbb{P}(Y\in B_m)}$ is the conditional distribution of $Y|Y\in B_m$. Further define $q_{a\cdot}(m) = \sum_{V \in \Delta \mathcal{V}} q_{av}(m) = \int_{B_m} p_{A|Y}(a|y)d \tilde{P}(y)$, $q_{\cdot v}(m) = \sum_{A \in \Delta \mathcal{A}} q_{av}(m) = \int_{B_m} p_{V|Y}(v|y)d \tilde{P}(y)$.

    By the law of total expectation, we have $\mathbb{E}[T] = \mathbb{E}[\mathbb{E}[T | \sigma]]$, where $\sigma = (\sigma_m)_{m \in [M]}$. Since $\mathbb{E}[U_m| \sigma_m] = \sum_{a,y}(q_{av}(m) - q_{a\cdot}(m)q_{\cdot y}(m))$ is independent of $\sigma_m$, and $T = \sum_m{ \mathbbm{1}(\sigma_m>4)\sigma_m U_m}$, then $\mathbb{E}[T] = \mathbb{E}[\mathbb{E}[T] | \sigma] = \sum_{m\in[M]} \mathbb{E}[U_m|\sigma_m] \mathbb{E} [\sigma_m \mathbbm{1}(\sigma_m>4)]$.

    According to Jensen's inequality and Lipschitzness assumption, we have
    \begin{align*}
        & \sum_{a,v}|q_{av}(m) - q_{a\cdot}(m)q_{\cdot y}(m)| \\
         = & \sum_{a,v}|\int (p_{A|Y}(a|y) - p_{A|Y}(a|y)d \tilde{P}_Y(y) )(p_{V|Y}(v|y) - p_{V|Y}(v|y)d \tilde{P}_Y(y)) d\tilde{P}_Y(y)|\\
         \leq &  \int \sum_a |p_{A|Y}(a|y) - p_{A|Y}(a|y)d \tilde{P}_Y(y)| \sum_v |p_{V|Y}(v|y) - p_{V|Y}(v|y)d \tilde{P}(y)|d \tilde{P}_Y(y)\\
         \leq & \int \int \sum_a |p_{A|Y}(a|y) - p_{A|Y}(a|y')|d\tilde{P}_Y(y') \int \sum_v |p_{V|Y}(v|y) - p_{V|Y}(v|y')|d\tilde{P}(y')d\tilde{P}_Y(y)\\
         = & \int \int \left\| p_{A|Y=y} - p_{A|Y=y'} \right\|_{1} d\tilde{P}_Y(y') \int \left\| p_{V|Y=y} - p_{V|Y=y'} \right\|_{1} d\tilde{P}_Y(y') d\tilde{P}_Y(y)\\
         \leq & \int \int \sqrt{L|y-y'|}d\tilde{P}_Y(y') \int \sqrt{L|y-y'|}d\tilde{P}_Y(y') d\tilde{P}_Y(y)\\
         \leq & \frac{L}{d}.
    \end{align*}

    Further, $\sum_{a,v}(q_{av}(m) - q_{a\cdot}(m)q_{\cdot y}(m)) \leq (\sum_{a,v}|q_{av}(m) - q_{a\cdot}(m)q_{\cdot y}(m)|)^2$, then $\mathbb{E}[U_m|\sigma_m] \leq \frac{L^2}{d^2}$. Since $\sum_{m \in [M]} \mathbb{E} [\sigma_m \mathbbm{1}(\sigma_m>4)] \leq \sum_{m \in [M]} \mathbb{E} [\sigma_m] = |\mathbb{D}|$.
    Finally we get $\mathbb{E}[T] \leq \frac{|\mathbb{D}|L^2}{d^2}$.
\end{proof}

\begin{proof} [Proof of (2)]
    The second part of the lemma gives a lower bound of $\mathbb{E}[T]$ when $d_{TV}(p,p') = \epsilon$. Let $p_m = \mathbb{P}(Y_m)$. Take two values $y,y' \in Y_m$, by triangle inequality and Lipschitzness assumption, we can get the continuity of $\left\| p_{V,A|Y =y} - p_{A|Y=y}p_{V|Y=y} \right\|_{1}$:

    \begin{align*}
        &| \left\| p_{V,A|Y =y} - p_{A|Y=y}p_{V|Y=y} \right\|_{1} - \left\| p_{V,A|Y =y'} - p_{A|Y=y'}p_{V|Y=y'} \right\|_{1} |\\
        \leq & \left\| p_{V,A|Y =y} - p_{A|Y=y}p_{V|Y=y} - p_{V,A|Y =y'} + p_{A|Y=y'}p_{V|Y=y'}\right\|_{1} \\
        \leq & \left\| p_{V,A|Y =y} - p_{V,A|Y =y'} \right\|_{1} + \left\| p_{A|Y=y'}p_{V|Y=y'} - p_{A|Y=y}p_{V|Y=y} \right\|_{1} \\
        \leq & \left\| p_{V,A|Y =y} - p_{V,A|Y =y'} \right\|_{1} + \left\| p_{A|Y=y'}- p_{A|Y=y} \right\|_{1} + \left\| p_{V|Y=y'}- p_{V|Y=y} \right\|_{1} \\
        \leq & 3L|y-y'|.
    \end{align*} 

    Then suppose $y^*_m \in {argmax}_{y \in B_m} \left\| p_{V,A|Y =y} - p_{V|Y=y}p_{A|Y=y} \right\|_{1}$, we have 
    
  \begin{align*}
      &\sum_{m \in [M]} \left\| p_{V,A|y =y^*_m} - p_{V|Y=y^*_m}p_{A|Y=y^*_m}\right\|_{1} p_m \\
      \geq & \mathbb{E}_y \left\| p_{V,A|Y =y} - p_{V|Y=y}p_{A|Y=y} \right\| \\
      \geq & \inf_{q: EOC} \left\| p - q \right\|_{1} = \epsilon.
  \end{align*} 

    By Cauchy-Schwarz inequality, we get \begin{equation}
        \sqrt{\sum_{v,a}( q_{va}(m) - q_{\cdot v}(m)q_{a \cdot}(m))^2} \geq \frac{\sum_{v,a}| p_{V,A|Y =y} - p_{V|Y=y}p_{A|Y=y}| }{\sqrt{|\mathcal{A}| |\mathcal{V}|}}.
    \end{equation} 
    
    Further, by applying the triangle inequality and Lipschitzness assumption, the denominator of the RHS 
    \begin{align*}
    &\sum_{v,a}| p_{V,A|Y =y} - p_{V|Y=y}p_{A|Y=y}| \\
    \geq & \left\| p_{V,A|y =y^*_m} - p_{V|Y=y^*_m}p_{A|Y=y^*_m}\right\|_{1} \\
    - &\sum_{v,a}|q_{va}(m) - p_{V,A|Y}(v,a|y^*_m)| \\
    - & \sum_{v,a}|q_{a \cdot}(m)(q_{ \cdot v}(m)- p_{V|Y}(v|y^*_m)| \\
    - & \sum_{v,a}|(p_{V|Y}(v|y^*_m))( q_{a \cdot}(m) - p_{A|Y}(a|y^*_m))|.
    \end{align*}
    
    The last three items are less than $\frac{L}{d}$, then \begin{equation}\sum_{v,a}| p_{V,A|Y =y} - p_{V|Y=y}p_{A|Y=y}| \geq \left\| p_{V,A|y =y^*_m} - p_{V|Y=y^*_m}p_{A|Y=y^*_m}\right\|_{1} - \frac{3L}{d}. \end{equation}
    
    Combining Eqs (3) and (4), we get $\sqrt{\sum_{v,a}( q_{va}(m) - q_{\cdot v}(m)q_{a \cdot}(m))^2} \geq \frac{\left\| p_{V,A|y =y^*_m} - p_{V|Y=y^*_m}p_{A|Y=y^*_m}\right\|_{1} - \frac{3L}{d}}{ \sqrt{|\mathcal{A}| |\mathcal{V}|} }$.

    By summing over all $m$, we have 
    \begin{align*}
        &\sum_{m \in [M]} \sqrt{\mathbb{E}[U_m|\sigma_m]} p_m  \\
        = & \sum_{m \in [M]} \sqrt{\sum_{v,a}( q_{va}(m) - q_{\cdot v}(m)q_{a \cdot}(m))^2 } p_m \\
        \geq & \sum_{m \in [M]} \frac{\left\| p_{V,A|y =y^*_m} - p_{V|Y=y^*_m}p_{A|Y=y^*_m}\right\|_{1} - \frac{3L}{d}}{ \sqrt{|\mathcal{A}| |\mathcal{V}|} }p_m \\
        \geq & \frac{\epsilon - 3\frac{L}{d}}{\sqrt{|\mathcal{A}| |\mathcal{V}|}}.
    \end{align*}

    Assume that each $B_m$ has sufficient samples such that $\sigma_m >4$, $\mathbb{E}[T] = \mathbb{E}[\mathbb{E}[T] | \sigma] = \sum_{m\in[M]} \mathbb{E}[U_m|\sigma_m] \mathbb{E} [\sigma_m \mathbbm{1}(\sigma_m>4)] = \sum_{m\in[M]} \mathbb{E}[U_m|\sigma_m] \mathbb{E} [\sigma_m] = \sum_{m\in[M]} \mathbb{E}[U_m|\sigma_m] |\mathbb{D}|p_m$. By Cauchy-Schwarz inequality, we have the following: 
    \begin{equation}
        \mathbb{E}[T] = \sum_{m\in[M]} \mathbb{E}[U_m|\sigma_m] |\mathbb{D}|p_m \geq \frac{ (\sum_{m\in[M]} \sqrt{\mathbb{E}[U_m|\sigma_m]} |\mathbb{D}|p_m  )^2 }{\sum_{m\in[M]}|\mathbb{D}|p_m} \geq  \frac{|\mathbb{D}|(\epsilon - 3\frac{L}{d})^2}{4|\mathcal{A}| |\mathcal{V}|}.
    \end{equation}

\end{proof}

\subsection{Details and Proofs for the Proposed Approach BFQR (Section \ref{sec04:method})}

\subsubsection{Pseudocode for Algorithm 1}
\label{subsec:algo1}

The pseudocode for the main component of BFQR is detailed in Algorithm \ref{algo1}.

\begin{algorithm}
\caption{Binned Fair Quantile Regression (BFQR)}
\label{algo1}
\begin{algorithmic}[1]
    \Require  Training Data $D_{tr}$ ,
            Calibration Data $D_{c}$,
            Test Data $X_{t}$ with sensitive attributes $A_t$,
            Desired error rate $\alpha$.
    
    \State {Train quantile regression models $\hat{q}_{\alpha_{lo} }$ and $\hat{q}_{\alpha_{hi}}$ on  $D_{tr}$, $\alpha_{lo} = \alpha/2$ and $\alpha_{hi} = 1 - \alpha/2$.}
    
    \ForAll{ $i \in \mathbb{D}_{c}$}
        \State Fit $\hat{q}_{\alpha_{lo} }$ and $\hat{q}_{\alpha_{hi}}$ on $D_{c}$, get prediction intervals $\hat{C}(X_i) = [\hat{q}_{\alpha_{lo}}(X_i), \hat{q}_{\alpha_{hi}}(X_i)]$.
        \State Compute conformity scores $R(X_i,Y_i) = max\{ \hat{q}_{\alpha_{lo}}(X_i) - Y_i , Y_i - \hat{q}_{\alpha_{hi}}(X_i)\}$. 
    \EndFor
    
    \State Divide $[Y_{min}, Y_{max}]$ into $M$ equal-mass bins, $B_1, ..., B_M$.

    \ForAll{ $m \in \{1, \dots, M\}$}
         \State Call Algorithm \ref{algo2} to optimized $\beta_m$.
         \ForAll{ $a \in A$}
             \State Compute quantile value $G_{a,m}(\beta_m)$.
         \EndFor
    \EndFor

    \ForAll{ $a \in A$}
        \ForAll{ $i \in D_{c}(a) = \{i: i \in \mathbb{D}_{c}, A_i = a \}$}
            \ForAll{ $m \in \{1, \dots, M\}$}
                \State Assume the true label $Y_{i} \in B_m$, compute corresponding prediction interval $C_m(X_{i}) = B_m \cap  [ \hat{q}_{\alpha_{lo}}(X_{i}) - G_{a,m}(\beta_m) , \hat{q}_{\alpha_{hi}}(X_{i}) +G_{a,m}(\beta_m) ]  $.
            \EndFor
            \State $C(X_{i}) = \bigcup_{m} C_m(X_{i}) $
        \EndFor
    \EndFor
    
    \Ensure Conformalized Prediction Interval $C(X_{t})$.
\end{algorithmic}
\end{algorithm}

\subsubsection{Proofs}
\label{proof: method02}


\begin{proof}[Proof of Theorem~\ref{theo: bin}]
    According to Assumption \ref{assump: exchan} about exchangeability, for any sensitive group $a$ and calibration bin $B_m$, the conformity scores $S$ for all samples in this group $\mathbb{D}_c(a,m)$ are exchangeable. Adding the score of new test data within the same group would not change the distribution of the scores in this group, then exchangeability still holds. With the additional assumption that conformity scores are almost surely distinct~\cite{vovk2005algorithmic,pmlr-v25-vovk12,lei2018distribution,romano2019conformalized,romano2020malice}, we get the following: 
    \begin{align*}
        &\beta_m 
        \leq \mathbb{P}\{ S_{n+1} \leq G_{a,m}(\beta_m) | A_{n+1} = a, Y_{n+1} \in B_m\} \\
        = & \mathbb{P}\{ Y_{n+1} \in C(X_{n+1}) | A_{n+1} = a, Y_{n+1} \in B_m\} \\
        \leq & \beta_m + 1/ (|\mathbb{D}_{c}(a,m)|+1) .
    \end{align*}

    Hence, $\beta_m \leq \mathbb{E}[Y_{n+1} \in C(X_{n+1}) | A_{n+1} = a, Y_{n+1} \in B_m] \leq \beta_m + 1/ (|\mathbb{D}_{c}(a,m)|+1) $. In the worst case, there exists a group such that the expectation equals $\beta_m$ whilst the expectation for the group with the least samples equals $\beta_m + 1/ (|\mathbb{D}_{c}(a,m)|+1)$. Therefore, the expectation of max coverage gap inside the $m$-th bin is upper bounded by $ \max_a \{ 1/ (|\mathbb{D}_{c}(a,m)|+1) \}$.
\end{proof}

\begin{proof}[Proof of Theorem~\ref{thoe: marigin}]
    According to the law of total probability and Theorem~\ref{theo: bin}, we have
    \begin{align*}
        &\mathbb{P}\{ Y_{n+1} \in C(X_{n+1}) \} \\
    = &\sum_a \sum_m \mathbb{P}\{ Y_{n+1} \in C(X_{n+1}) | A_{n+1} = a, Y_{n+1} \in B_m\} \mathbb{P} \{A_{n+1} = a, Y_{n+1} \in B_m\} \\
    \geq & \sum_a \sum_m \beta_m \mathbb{P} \{A_{n+1} = a, Y_{n+1} \in B_m\} = \beta_m.
    \end{align*}
\end{proof}

\subsection{Details for the Constrained Optimization (Section \ref{sec: method03}) }

\subsubsection{Detailed Methods}
\label{detail: opt}

A feasible solution to Equation (\ref{eq:opt}) can be quickly found. Recall split conformal prediction introduced in Section \ref{sec02:prelim},  $\hat{Q}_{1 - \alpha}(S,\mathbb{D}_{c})$ denotes the $1 - \alpha$ quantile value of scores in $\mathbb{D}_{c}$. Adjust $\hat{C}(X_i)$, $X_i \in \mathbb{D}_{c}$ by $\hat{Q}_{1 - \alpha}(S,\mathbb{D}_{c})$. For example, let $C^S(X_i) = [ \hat{q}_{\alpha_{lo}}(X_{i}) - \hat{Q}_{1 - \alpha}(S,\mathbb{D}_{c}), \hat{q}_{\alpha_{hi}}(X_{i}) +  \hat{Q}_{1 - \alpha}(S,\mathbb{D}_{c}) ]$, then $C^S(X_i)$, $X_i \in \mathbb{D}_{c}$ naturally has coverage of $1-\alpha$. Compute the coverage rate $\beta_m$ of adjusted calibration prediction sets in each bin $B_m$, we get a feasible solution which we use as a starting point of the optimization process.

For any $i$, let $m^{-}_i$ and $m^{+}_i$ denote the number of the first bin and last bin that has a non-null intersection $C_m(X_{i})$. In the relaxed dummy prediction set $C^{d}(X_i)$, $C_m(X_{i})$ are substituted by $B_m$ for $m^{-}_i < m < m^{+}_i$. Naturally, the width of continuous prediction interval $C^{d}(X_i)$ is an upper bound of the original $C(X_i)$, as all the possible disjoint gaps are filled.

Denote $W$ to be the objective function in Eq. \ref{eq:opt}, i.e., the mean width of the prediction intervals on test data. Using the dummy prediction interval $C^{d}(X_i)$ as a bridge, and let $W^S = \sum_{ i \in |\mathbb{D}_{t}|} { |C^S(X_{i})| } / |\mathbb{D}_{t}|$ denote the mean width of prediction intervals at the starting point, we provide an upper bound of the mean width of our prediction intervals $W$:

\begin{prop}
\label{prop: relax}
$ W \leq W^S + \sum_{ i \in |\mathbb{D}_{t}|} { [ G_{A_i,m^{-}_i}(\beta_{m^{-}_i}) + G_{A_i,m^{+}_i}(\beta_{m^{+}_i}) - 2 \hat{Q}_{1 - \alpha}(S,\mathbb{D}_{c}(a)) ]} / |\mathbb{D}_{t}| $. 
\end{prop}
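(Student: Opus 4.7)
The plan is to use the dummy continuous interval $C^{d}(X_i)$ as a bridge between the disjoint union $C(X_i)$ and the split conformal baseline $C^{S}(X_i)$. Since $C^{d}(X_i) = \mathrm{Convex}(\bigcup_m C_m(X_i))$ is by definition the convex hull of all the sub-intervals, we immediately have $\bigcup_m C_m(X_i) \subseteq C^{d}(X_i)$, hence $|C(X_i)| \leq |C^{d}(X_i)|$ for every test point $i$. So it is enough to bound $|C^{d}(X_i)|$ in terms of $|C^{S}(X_i)|$ and the difference of the adjustment terms, and then average over $i \in \mathbb{D}_t$.

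The main computation is to write out the endpoints of $C^{d}(X_i)$ explicitly. By the definitions of $m^{-}_i$ and $m^{+}_i$, the convex hull is exactly the segment from the left endpoint of $C_{m^{-}_i}(X_i)$ to the right endpoint of $C_{m^{+}_i}(X_i)$. Because $C_m(X_i) = B_m \cap [\hat{q}_{\alpha_{lo}}(X_i) - G_{A_i,m}(\beta_m),\ \hat{q}_{\alpha_{hi}}(X_i) + G_{A_i,m}(\beta_m)]$, the left endpoint satisfies $\geq \hat{q}_{\alpha_{lo}}(X_i) - G_{A_i,m^{-}_i}(\beta_{m^{-}_i})$ and the right endpoint satisfies $\leq \hat{q}_{\alpha_{hi}}(X_i) + G_{A_i,m^{+}_i}(\beta_{m^{+}_i})$, possibly being clipped by the bin boundaries $Y^{-}_{m^{-}_i}$ and $Y^{+}_{m^{+}_i}$. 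Dropping the clipping only inflates the width, giving the upper bound
\begin{equation*}
|C^{d}(X_i)| \leq \bigl(\hat{q}_{\alpha_{hi}}(X_i) - \hat{q}_{\alpha_{lo}}(X_i)\bigr) + G_{A_i,m^{-}_i}(\beta_{m^{-}_i}) + G_{A_i,m^{+}_i}(\beta_{m^{+}_i}).
\end{equation*}
Meanwhile, directly from the definition, $|C^{S}(X_i)| = \bigl(\hat{q}_{\alpha_{hi}}(X_i) - \hat{q}_{\alpha_{lo}}(X_i)\bigr) + 2\hat{Q}_{1-\alpha}(S,\mathbb{D}_c(a))$. Subtracting the two expressions cancels the common quantile-difference term and yields the per-sample inequality $|C(X_i)| - |C^{S}(X_i)| \leq G_{A_i,m^{-}_i}(\beta_{m^{-}_i}) + G_{A_i,m^{+}_i}(\beta_{m^{+}_i}) - 2\hat{Q}_{1-\alpha}(S,\mathbb{D}_c(a))$. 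Averaging over $i \in \mathbb{D}_t$ produces the stated bound on $W$.

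The main obstacle, and the reason one has to be careful, is the possibility that the extreme bins $B_{m^{-}_i}$ or $B_{m^{+}_i}$ actually clip the raw quantile-adjusted interval, so the true endpoints sit strictly inside $\hat{q}_{\alpha_{lo}}(X_i) - G_{A_i,m^{-}_i}(\beta_{m^{-}_i})$ and $\hat{q}_{\alpha_{hi}}(X_i) + G_{A_i,m^{+}_i}(\beta_{m^{+}_i})$. The resolution is simply to note that, since we are proving an upper bound on $W$, ignoring the clipping is harmless: discarding the $B_m$ intersection can only enlarge $|C^{d}(X_i)|$, so the unclipped endpoints give a valid, and algebraically cleaner, majorant. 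A secondary subtlety is the notation $\hat{Q}_{1-\alpha}(S,\mathbb{D}_c(a))$ versus $\hat{Q}_{1-\alpha}(S,\mathbb{D}_c)$ used earlier when defining $W^{S}$; this is handled by interpreting the baseline as the group-conditional split-conformal quantile, so that the subtraction above is exact.
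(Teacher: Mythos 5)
Your proof is correct and follows exactly the route the paper intends: bound $|C(X_i)|$ by the dummy convex hull $|C^d(X_i)|$, read off its endpoints from the first and last intersecting bins (dropping the bin clipping only enlarges the width), and subtract the explicit width of $C^S(X_i)$ before averaging. The paper itself states the proposition without a formal proof, so your write-up supplies the missing details, and your observation about the $\hat{Q}_{1-\alpha}(S,\mathbb{D}_c(a))$ versus $\hat{Q}_{1-\alpha}(S,\mathbb{D}_c)$ notational mismatch is a legitimate inconsistency in the paper's statement that you resolve in the only sensible way.
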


Though easier to compute, the RHS of \ref{prop: relax} is still a combination of quantile functions, and the number of first and last intersecting bins $m^{-}_i$ and $m^{+}_i$ are related with $G_{A_i,m}(\beta_m)$, meaning they vary with the change of $\beta_m$. Therefore, we propose several good properties that allow us to approximate subgradients of quantile functions. Denote $Q_{a,m}(\beta)$ as the underlying distribution for the quantile function of scores on the calibration data $\mathbb{D}_{c}(a,m)$, which is what $G_{a,m}(\beta)$ estimates for. 

\begin{lemma} For an exchangeable sequence of random variables, almost surely, the difference between the empirical and the predictive distribution functions converges to zero uniformly. \cite{BERTI1997385}
\end{lemma}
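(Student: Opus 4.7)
The plan is to reduce this statement to the classical Glivenko--Cantelli theorem via de Finetti's representation. By de Finetti's theorem, any infinite exchangeable sequence $X_1, X_2, \ldots$ is conditionally i.i.d.\ given a directing random probability measure $\mu$; write $F_\mu$ for its CDF, $\hat{F}_n(x) = \frac{1}{n}\sum_{i=1}^n \mathbb{1}(X_i \leq x)$ for the empirical CDF, and $F_n(x) = \mathbb{P}(X_{n+1} \leq x \mid \mathcal{F}_n)$ with $\mathcal{F}_n = \sigma(X_1, \ldots, X_n)$ for the predictive CDF. The goal is to show $\sup_x |F_n(x) - \hat{F}_n(x)| \to 0$ almost surely, and the natural route is to show both sequences converge almost surely and uniformly to the common random limit $F_\mu$.

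First I would establish pointwise almost-sure convergence of both sequences to $F_\mu(x)$ at each fixed $x$. For $\hat{F}_n(x)$, conditional on $\mu$ the indicators $\mathbb{1}(X_i \leq x)$ are i.i.d.\ Bernoulli$(F_\mu(x))$, so the conditional strong law of large numbers gives $\hat{F}_n(x) \to F_\mu(x)$ a.s. For $F_n(x)$, the sequence $\{F_n(x)\}_n$ is a uniformly bounded martingale relative to $\{\mathcal{F}_n\}$, so Doob's martingale convergence theorem yields an almost-sure limit $F_\infty(x)$. Exchangeability implies this limit is measurable with respect to the invariant (equivalently, tail) sigma-field, and the Hewitt--Savage zero--one law together with the conditional-i.i.d.\ structure forces $F_\infty(x) = F_\mu(x)$ a.s. Hence $|F_n(x) - \hat{F}_n(x)| \to 0$ a.s.\ pointwise.

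Next I would upgrade pointwise convergence to uniform convergence via the standard Polya argument. Both $\hat{F}_n$ and $F_n$ are nondecreasing and $[0,1]$-valued (the latter because $F_n(x) = \mathbb{E}[\mathbb{1}(X_{n+1}\leq x)\mid\mathcal{F}_n]$ inherits monotonicity in $x$), and the limit $F_\mu$ is a bona fide CDF, right-continuous with at most countably many jumps. Choose a countable dense set $D$ containing all jump points of $F_\mu$; pointwise a.s.\ convergence on $D$ holds simultaneously off a single null set by a countable union. Monotonicity then sandwiches the supremum gap between values at adjacent points of $D$, and the residual error is controlled by the jump sizes of $F_\mu$, which can be made arbitrarily small by refining the partition. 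Combining the two estimates with the triangle inequality yields $\sup_x |F_n(x) - \hat{F}_n(x)| \to 0$ a.s.

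The main obstacle will be the identification of the martingale limit of $F_n$ with $F_\mu$: making precise that the limit is invariant-measurable and therefore equals the conditional probability $\mathbb{P}(X_1 \leq x \mid \mu) = F_\mu(x)$. The cleanest route is to express $F_n(x)$ as a reverse martingale $\mathbb{E}[\mathbb{1}(X_1\leq x)\mid \mathcal{E}_n]$ along the decreasing exchangeable sigma-fields $\mathcal{E}_n$, whose intersection is the invariant sigma-field generating $\mu$; the reverse martingale convergence theorem then delivers the identification directly. A secondary technical point is verifying that, outside a single null set, the random functions $x \mapsto F_n(x)$ and $x \mapsto \hat{F}_n(x)$ are genuinely nondecreasing (not merely up to null sets depending on $x$), so that the Polya step applies pathwise; this is handled by working with right-continuous regular versions.
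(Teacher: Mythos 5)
The paper does not actually prove this lemma: it is quoted from Berti and Rigo's 1997 Glivenko--Cantelli theorem for exchangeable sequences and used as a black box, so your attempt has to be judged on its own merits rather than against an in-paper argument. Your overall strategy --- de Finetti's representation, then showing that both the empirical and the predictive distribution functions converge uniformly a.s.\ to the random CDF $F_\mu$ of the directing measure --- is the standard route, and the identification of the martingale limit is fine: $F_n(x)=\mathbb{E}[F_\mu(x)\mid\mathcal{F}_n]$ is a closed martingale converging a.s.\ to $F_\mu(x)$ (your appeal to Hewitt--Savage is misplaced, since the exchangeable $\sigma$-field is not trivial for a non-i.i.d.\ exchangeable sequence, but the reverse-martingale route you name as the ``cleanest'' does work).

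The genuine gap is in the Polya step, and it bites exactly when $F_\mu$ has atoms. You propose to ``choose a countable dense set $D$ containing all jump points of $F_\mu$'' and combine pointwise a.s.\ convergence over $D$ by a countable union of null sets. But $F_\mu$ is random, so its jump points --- and hence $D$ --- are random, whereas the null sets you are discarding are indexed by fixed, deterministic $x$; a union of null sets over a random countable index set is not controlled, so convergence at the jump points is not established. Your fallback, that the residual error ``is controlled by the jump sizes of $F_\mu$, which can be made arbitrarily small by refining the partition,'' is false: an atom of $F_\mu$ contributes a jump of fixed size that no refinement shrinks. As written, your argument proves the lemma only on the event that $F_\mu$ is continuous. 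The empirical half is easy to repair: conditionally on $\mu$ the variables are i.i.d.\ with CDF $F_\mu$, so the classical Glivenko--Cantelli theorem applied under the conditional law already gives $W_n:=\sup_x|\hat F_n(x)-F_\mu(x)|\to 0$ a.s., with no Polya argument needed. The predictive half needs a different device; for instance, since $\hat F_n$ is $\mathcal{F}_n$-measurable, $\sup_x|F_n(x)-\hat F_n(x)|=\sup_x\bigl|\mathbb{E}[F_\mu(x)-\hat F_n(x)\mid\mathcal{F}_n]\bigr|\le \mathbb{E}[W_n\mid\mathcal{F}_n]$ (the supremum may be restricted to the rationals by right-continuity), and Hunt's dominated-convergence lemma for conditional expectations along the filtration gives $\mathbb{E}[W_n\mid\mathcal{F}_n]\to 0$ a.s. With that substitution the proof closes; without it, the uniformity claim for $F_n$ near the random discontinuities of $F_\mu$ is unsupported.
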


\begin{prop} For any $0 < \beta < 1$, there exists a $\epsilon$ such that $|Q_{a,m}(\beta) - G_{a,m}(\beta)| \leq \epsilon$. Then for any $0< \beta_1 < \beta_2 < 1$, the maximal absolute difference between the slope of true quantile function $ t_{a,m} = \frac{Q_{a,m}(\beta_1) - Q_{a,m}(\beta_2)}{\beta_1 - \beta_2}$ and the slope of empirical quantile function$ \hat{t}_{a,m} = \frac{G_{a,m}(\beta_1) - G_{a,m}(\beta_2)}{\beta_1 - \beta_2} $is upper bounded by$ \frac{2\epsilon}{\beta_1 - \beta_2}$. This difference converges to 0 almost surely when $|\mathbb{D}_c(a,m)| \to \infty$.
\end{prop}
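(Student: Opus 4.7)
The plan is to treat the two claims of the proposition separately: a deterministic triangle-inequality bound for the slope gap, followed by an almost-sure convergence argument that feeds the uniform convergence guarantee of the preceding lemma into the quantile level.

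First, for the pointwise bound, I would apply the triangle inequality directly at the two endpoints $\beta_1$ and $\beta_2$. Using the hypothesis $|Q_{a,m}(\beta) - G_{a,m}(\beta)| \leq \epsilon$ at both $\beta_1$ and $\beta_2$, write
\begin{align*}
|t_{a,m} - \hat{t}_{a,m}|
&= \frac{1}{\beta_2 - \beta_1}\bigl|\bigl(Q_{a,m}(\beta_2) - G_{a,m}(\beta_2)\bigr) - \bigl(Q_{a,m}(\beta_1) - G_{a,m}(\beta_1)\bigr)\bigr|\\
&\leq \frac{|Q_{a,m}(\beta_2) - G_{a,m}(\beta_2)| + |Q_{a,m}(\beta_1) - G_{a,m}(\beta_1)|}{\beta_2 - \beta_1}
\leq \frac{2\epsilon}{\beta_2 - \beta_1}.
\end{align*}
This step is routine and requires nothing beyond the triangle inequality, so the pointwise portion of the statement is essentially immediate.

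Second, for the almost-sure convergence as $|\mathbb{D}_c(a,m)| \to \infty$, I would invoke the exchangeable-sequence lemma cited just above, which asserts uniform almost-sure convergence of the empirical distribution function to the predictive distribution function. The task is to transfer this uniform convergence from the distribution level to the quantile level. My plan is to note that $Q_{a,m}$ and $G_{a,m}$ are the generalized inverses of the corresponding distribution functions, and to argue that on any compact sub-interval of $(0,1)$ containing $\beta_1$ and $\beta_2$, uniform convergence of the distribution functions implies uniform convergence of their generalized inverses at every continuity point of $Q_{a,m}$ (a standard consequence of the Glivenko–Cantelli style argument under mild continuity of the score distribution). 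Consequently the constant $\epsilon$ in the first part can be taken to shrink to $0$ almost surely, which together with the deterministic bound yields $|t_{a,m} - \hat{t}_{a,m}| \to 0$ almost surely.

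The main obstacle is the second step: the cited lemma is phrased for distribution functions, so the transfer to quantile functions needs a mild regularity assumption (continuity or strict monotonicity of $Q_{a,m}$ near $\beta_1, \beta_2$) to avoid issues at flat or jump points of the underlying CDF. In the setting here the conformity scores are assumed almost surely distinct (Assumption~\ref{assump: exchan}), which rules out atoms and should make this transfer routine, but it is the one place where care is needed. Everything else reduces to an application of the triangle inequality and a standard inversion argument, so the proposition should follow in a few lines once the quantile-inversion step is justified.
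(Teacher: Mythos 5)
Your proposal is correct and follows essentially the same route the paper intends: the paper states this proposition immediately after the cited exchangeability lemma and offers no further argument, so the triangle-inequality bound at the two levels $\beta_1,\beta_2$ plus an appeal to uniform almost-sure convergence is exactly the implied proof (the bound should of course read $\frac{2\epsilon}{|\beta_1-\beta_2|}$, as you implicitly correct). Your one flagged concern — transferring uniform convergence from distribution functions to quantile functions, which needs continuity of $Q_{a,m}$ at $\beta_1,\beta_2$ (no flat stretches of the score CDF there; the almost-sure distinctness in Assumption~\ref{assump: exchan} only rules out atoms, not gaps in the support) — is a genuine step the paper glosses over, and your treatment of it is more careful than the original.
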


The dummy prediction interval $C^{d}(X_i)$ also exhibits some properties that can be further utilized to  facilitate computation. First, any change of $\beta_{m}$ for bins whose $m$ is between $m^{-}$ and $ m^{+}_i$ would not change $C^{d}(X_i)$. Second, for any increase in $\beta_{m}$, the length of the intersection $|C_m(X_i)|$ in bin $B_m$ for any $X_i$ can not decrease, there are three cases: 1) $|C_m(X_i)|$ stays at 0 or $Y^{+}_m - Y^{-}_m$, meaning still no intersection between prediction interval $[ \hat{q}_{\alpha_{lo}}(X_{i}) - G_{a,m}(\beta_m) , \hat{q}_{\alpha_{hi}}(X_{i}) +G_{a,m}(\beta_m) ]$ and bin $B_m$, or the prediction interval are all covered in bin, therefore, the true change in interval lengths is 0; 2) $|C_m(X_i)|$ increases from 0 to any number within $(0, Y^{+}_m - Y^{-}_m)$, or from any number within $(0, Y^{+}_m - Y^{-}_m)$ to all covered in bin; 3) $|C_m(X_i)|$ increases from one number to another within $(0, Y^{+}_m - Y^{-}_m)$. For the first two cases, our estimation $G_{a,m}(\beta_1) - G_{a,m}(\beta_2)$ exceeds the true value, and for the last one, our estimation is exact. A similar property holds for the decrease in $\beta_{m}$.

According to the above properties, we maintain a dictionary $D$ that stores the number of data that have a non-full intersection in each bin, denoted as $u_{m}$, which includes case 3 and part of case 2. We also deliberately select steps at each iteration at a small value that not exceeds $1/|\mathbb{D}_c(a,m)|$, and we allow estimation discrepancies in the algorithm, then case 2 has very little influence that could be ignored. Therefore, we use ${grad}^{+}_m = u_{m} \cdot \hat{t}^{+}_{m}/ |\mathbb{D}_m| $ and ${grad}^{-}_m = u_{m} \cdot \hat{t}^{-}_{m}/ |\mathbb{D}_m| $as the approximated increasing and decreasing gradient for each bin $B_m$.

\subsubsection{Pseudocode for Algorithm 2}

The pseudocode for the constrained optimization is detailed in Algorithm \ref{algo2}.

\label{subsec:algo2}

\begin{algorithm}
\caption{Robust Optimization Method with EOC Constraint}
\label{algo2}
\begin{algorithmic}[1]
    \Require  Bins $B_1, ..., B_M$,
            Prediction intervals $\hat{C}(X_i) = [\hat{q}_{\alpha_{lo} }(X_i)$, $\hat{q}_{\alpha_{hi}}(X_i)]$ and
            Scores $R(X_i,Y_i)$ for all $i \in \mathbb{D}_{c}$,
            Desired coverage rate $1-\alpha$,
            Max iteration round $R$,
            Allowed Estimation Error $\epsilon$.
    
    \State {Compute $\hat{Q}_{1 - \alpha}(S,\mathbb{D}_{c})$ as the $1-\alpha$ quantile of $\{R(X_i,Y_i)$, $i \in \mathbb{D}_{c}\}$.}
    
    \State {Compute $\tilde{C}(X_i) = [ \hat{q}_{\alpha_{lo}}(X_{i}) - \hat{Q}_{1 - \alpha}(S,\mathbb{D}_{c}), \hat{q}_{\alpha_{hi}}(X_{i}) +  \hat{Q}_{1 - \alpha}(S,\mathbb{D}_{c}) ]$ for all $i \in \mathbb{D}_{c}$}.

    \ForAll { $m \in \{1, \dots, M\}$ }
         \State {Compute the coverage rate of $\tilde{C}(X_i)$ for $i 
         \in D_{c}(m)$ as initial value $\beta_m^0$.}
    \EndFor

    \State {Initiate dictionary $D$ that stores the number of data that have a non-full intersection in each bin with initial values $\{ \beta_m^0, m \in 1, \dots, M \}$, and $r \gets 0$.}
    \State {Initiate Array $A$ of shape $2*A*M$ that stores the max possible step.}
    \While { $r <  R$ }
        \ForAll { $m \in \{1, \dots, M\}$ }
            \State{Compute ${grad}^{+}_m$ and ${grad}^{-}_m$ , along with max step $\eta^{+}_m$ and $\eta^{-}_m$. }
        \EndFor
        \State{$ m^{+}_{min} \gets \argmin{ m: {grad}^{+}_m } $, $ m^{+}_{max} \gets \argmax{ m: {grad}^{-}_m } $. }
        \State{$\eta^r = \min \{ \eta^{+}_m, \eta^{-}_m \} $ }
        \If{ $ m^{+}_{min} + 2 \epsilon / \eta^r > m^{-}_{max} $}
            \State{Break.}
        \EndIf
        \State{ Update $\beta_{m^{+}_{min}}^{r+1} \gets \beta_{m^{+}_{min}}^{r} + \eta^r$, and  $\beta_{m^{-}_{max}}^{r+1} \gets \beta_{m^{-}_{max}}^{r} - \eta^r$. }
        \State{ Update dictionary $D$ and Array $A$.}
    \EndWhile
    
    \Ensure Optimal coverage rate $\beta_m^r$ for each bin $m$.
\end{algorithmic}
\end{algorithm}

\subsection{Experimental Details}

Our proposed algorithm is highly computationally efficient. All experiments are conducted on Google Colab with only CPUs.

\subsubsection{Synthetic Data Generation Process}
\label{subsec: syn gen}
The data generation process for the synthetic experiments is as follows:
\begin{equation}
\small
\label{syn}
Y=\left\{
\begin{aligned}
( A + \sum_{i=1}^{10}{X_i} + 10 \epsilon_1) \epsilon_3,& & A = 0, & & 0 \leq \epsilon_4 \leq 0.1, \\
10 A \epsilon_2, & & A = 1, & & 0.1 < \epsilon_4 \leq 0.3 , \\
(A + \sum_{i=1}^{10}{X_i} + 10 \epsilon_1) \epsilon_3, & & A = 2, & & 0.3 < \epsilon_4 \leq 1 ,
\end{aligned}
\right.
\end{equation}

where $\epsilon_1,\epsilon_2, \epsilon_3, \epsilon_4 \sim \mathcal{N}(0,1)$, $X_i \sim \exp(1)$.

\subsubsection{Convergence Analysis of BFQR}
\label{exp: conv}

We experiment on the Adult dataset to analyze the convergence of our algorithm with five different random seeds for data splitting. The convergence process of the width of prediction intervals ($W$) is depicted by solid lines, while the convergence of the dummy upper bound with continuous prediction intervals ($W^S$) is represented by dotted lines, as shown in Figure \ref{Fig: conv}.

\begin{figure}[H] 
\centering 
\includegraphics[width=0.9\textwidth]{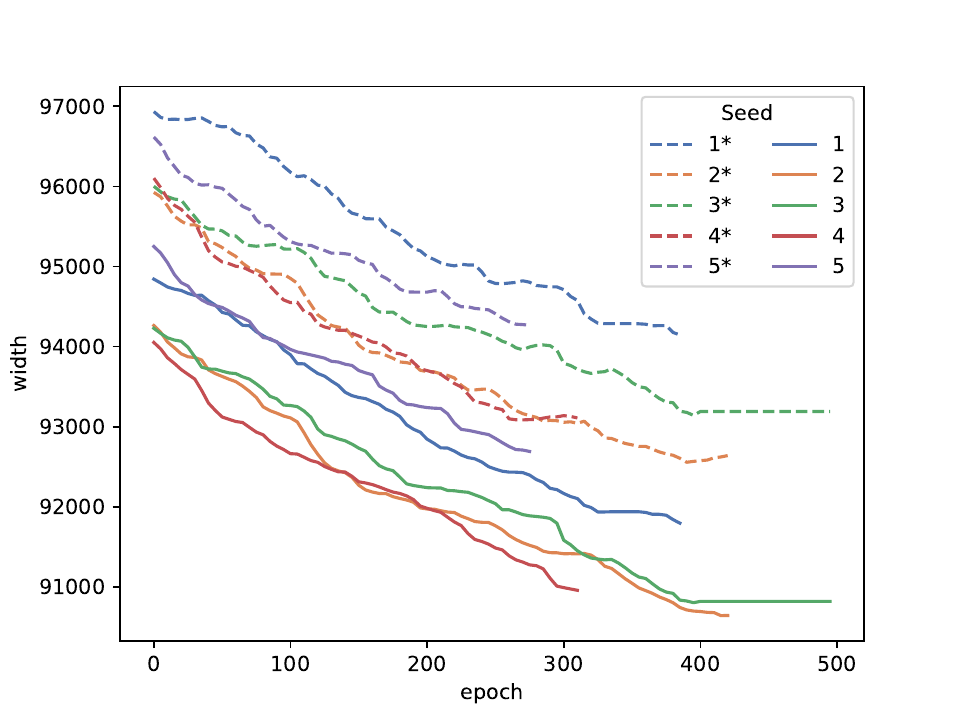} 
\caption{Converging process of prediction interval width. The number in the legend denotes the seed selected for splitting data. The dotted lines with '*' in the legend show the process for continuous prediction intervals, while the solid lines show the process for disjoint prediction intervals.
} 
\label{Fig: conv} 
\end{figure}

The convergence analysis results demonstrate that both $W^S$ and $W$ exhibit similar convergence trends. This indicates that optimizing $W^S$ leads to a reduction in $W$, thereby validating Proposition \ref{prop: relax}. Furthermore, the continuous decrease in $W$ highlights the efficiency and effectiveness of the multiple approximation techniques used in Sec. \ref{detail: opt}. These techniques enable faster computation and preserve the information of true gradients.
\end{document}